\theoremstyle{plain}
\newtheorem{theorem}{Theorem}[section]
\theoremstyle{definition}
\theoremstyle{remark}
\newtheorem*{restate}{Theorem} 
\begin{document}

\twocolumn[
  \icmltitle{Fixed Aggregation Features Can Rival GNNs}



  \icmlsetsymbol{equal}{*}

  \begin{icmlauthorlist}
    \icmlauthor{Celia Rubio-Madrigal}{cispa}
    \icmlauthor{Rebekka Burkholz}{cispa}
  \end{icmlauthorlist}

  \icmlaffiliation{cispa}{CISPA Helmholtz Center for Information Security}

  \icmlcorrespondingauthor{Celia Rubio-Madrigal}{celia.rubio-madrigal@cispa.de}

  \icmlkeywords{Machine Learning, Graph Neural Networks, GNNs, Graph Learning, Kolmogorov-Arnold, Fixed Aggregation, Tabular Data}

  \vskip 0.3in
]



\printAffiliationsAndNotice{}  

\begin{abstract}
Graph neural networks (GNNs) are widely believed to excel at node representation learning through trainable neighborhood aggregations. We challenge this view by introducing Fixed Aggregation Features (FAFs), a training-free approach that transforms graph learning tasks into tabular problems. This simple shift enables the use of well-established tabular methods, offering strong interpretability and the flexibility to deploy diverse classifiers. Across 14 benchmarks, well-tuned multilayer perceptrons trained on FAFs rival or outperform state-of-the-art GNNs and graph transformers on 12 tasks—often using only mean aggregation. The only exceptions are the Roman Empire and Minesweeper datasets, which typically require unusually deep GNNs. To explain the theoretical possibility of non-trainable aggregations, we connect our findings to Kolmogorov–Arnold representations and discuss when mean aggregation can be sufficient. In conclusion, our results call for (i) richer benchmarks benefiting from learning diverse neighborhood aggregations, (ii) strong tabular baselines as standard, and (iii) employing and advancing tabular models for graph data to gain new insights into related tasks.
\end{abstract}

\section{Introduction}

Graph neural networks (GNNs) have become the standard approach for learning from graph based data, and in particular, for solving node classification. 
Most models follow the message-passing paradigm \citep{quachem}, where each node updates its representation by alternating neighborhood aggregation with learned linear combinations across multiple hops. 
This framework has been remarkably successful at combining node features with graph structure, driving applications in domains ranging from social networks to biology \citep{bongini2023biognn,Sharma_2024}. Yet, it comes at the cost of high model complexity that poses challenges for interpretation. We ask the question whether this high complexity is really necessary. 

Recent evidence \citep{luo2024classic} shows that classic models, such as GCN \citep{kipf2017semisupervised}, GAT \citep{velickovic2018graph}, and GraphSAGE \citep{graphsage}, surprisingly remain competitive when equipped with proper hyperparameter tuning and optimization techniques. When carefully tuned, they can rival more sophisticated approaches, including state-of-the-art Graph Transformers \citep{SGFormer,Polynormer,GOAT,NodeFormer,NAGphormer,GraphGPS,Exphormer} and models designed for heterophily \citep{H2GCN,CPGNN,GPRGNN,FSGNN,GloGNN}.
\nocite{luo2025can}

These results invite a closer look at which components of graph  architectures are essential for strong performance, and thus raise a natural next question:\ \emph{How relevant is {learning} the aggregation?} In fact, the field has invested in {learning} increasingly complex {convolution layers} and attention mechanisms.
In this paper we challenge that premise from first principles. Leveraging the Kolmogorov–Arnold representation theorem \citep{kolmogorov1957representation,SCHMIDTHIEBER2021119}, we give an explicit, lossless construction of neighborhood aggregations. Consequently, one can in theory encode neighbor features without discarding information. However, the same construction exposes a crucial gap between expressiveness and learnability:\ These lossless encoders are numerically brittle (e.g., sensitive to floating-point noise) and tend to produce ``rough" embeddings that are ill-suited for standard classifiers on Euclidean space such as MLPs.

\begin{figure*}
    \centering
    \resizebox{\linewidth}{!}{
    \begin{tikzpicture}[yscale=0.5]
        \node at (-0.5,0) {Base:\ $h_v^{(0)}=x_v$};
        \node at (-3,-1.5) {\begin{tabular}{c}
Node $v$'s \\ features
        \end{tabular}};
        \draw (0,-2) rectangle ++(1,1);
        \draw (-2,-2) rectangle ++(1,1);
        \draw (-1,-2) rectangle ++(1,1);
        \node at (1.5,-1.5) {$\oplus$};
        
        \begin{scope}[yshift=-3cm]
        \node at (-3,-1.75) {\begin{tabular}{c}
Neighbor \\ features
        \end{tabular}};
        \draw[dashed] (0,-2) rectangle ++(1,1);
        \draw[dashed] (-2,-2) rectangle ++(1,1);
        \draw[dashed] (-1,-2) rectangle ++(1,1);
        
        \begin{scope}[xshift=0.125cm,yshift=-0.25cm]
        \draw[dashed,fill=white] (0,-2) rectangle ++(1,1);
        \draw[dashed,fill=white] (-2,-2) rectangle ++(1,1);
        \draw[dashed,fill=white] (-1,-2) rectangle ++(1,1);
        
        \begin{scope}[xshift=0.125cm,yshift=-0.25cm]
        \draw[dashed,fill=white] (0,-2) rectangle ++(1,1);
        \draw[dashed,fill=white] (-2,-2) rectangle ++(1,1);
        \draw[dashed,fill=white] (-1,-2) rectangle ++(1,1);
        \end{scope}
        \end{scope}
        \end{scope}

        \begin{scope}[xshift=4cm]
        \node at (-0.5,0) {1$^{\text{st}}$ hop:\ $h_v^{(1)}$};
        \draw (0,-2) rectangle ++(1,1);
        \draw (-2,-2) rectangle ++(1,1);
        \draw (-1,-2) rectangle ++(1,1);
        \node at (1.5,-1.5) {$\oplus$};

        \draw[-{latex[scale=2]}] (-5.25,-5) to 
        node[midway,fill=white,circle,inner sep=1pt] {\large$\Phi$} 
        (-1.5,-1.5);
        \draw[-{latex[scale=2]}] (-4.25,-5) to node[midway,fill=white,circle,inner sep=1pt] {\large$\Phi$} 
        (-.5,-1.5);
        \draw[-{latex[scale=2]}] (-3.25,-5) to
        node[midway,fill=white,circle,inner sep=1pt] {\large$\Phi$} 
        (0.5,-1.5);
        \end{scope}
        
        \begin{scope}[yshift=-3cm,xshift=4cm]
        \node at (0,-3) {};
        \draw[dashed] (0,-2) rectangle ++(1,1);
        \draw[dashed] (-2,-2) rectangle ++(1,1);
        \draw[dashed] (-1,-2) rectangle ++(1,1);
        
        \begin{scope}[xshift=0.125cm,yshift=-0.25cm]
        \draw[dashed,fill=white] (0,-2) rectangle ++(1,1);
        \draw[dashed,fill=white] (-2,-2) rectangle ++(1,1);
        \draw[dashed,fill=white] (-1,-2) rectangle ++(1,1);
        
        \begin{scope}[xshift=0.125cm,yshift=-0.25cm]
        \draw[dashed,fill=white] (0,-2) rectangle ++(1,1);
        \draw[dashed,fill=white] (-2,-2) rectangle ++(1,1);
        \draw[dashed,fill=white] (-1,-2) rectangle ++(1,1);
        \end{scope}
        \end{scope}
        \end{scope}
        
        \begin{scope}[xshift=8cm]
        \node at (-0.5,0) {2$^{\text{nd}}$ hop:\ $h_v^{(2)}$};
        \draw (0,-2) rectangle ++(1,1);
        \draw (-2,-2) rectangle ++(1,1);
        \draw (-1,-2) rectangle ++(1,1);
        \node at (1.5,-1.5) {$\Bigg\rbrace$}; 

        \draw[-{latex[scale=2]}] (-5.25,-5) to 
        node[midway,fill=white,circle,inner sep=1pt] {\large$\Phi$} 
        (-1.5,-1.5);
        \draw[-{latex[scale=2]}] (-4.25,-5) to 
        node[midway,fill=white,circle,inner sep=1pt] {\large$\Phi$} 
        (-0.5,-1.5);
        \draw[-{latex[scale=2]}] (-3.25,-5) to 
        node[midway,fill=white,circle,inner sep=1pt] {\large$\Phi$} 
        (0.5,-1.5);
        \end{scope}
        
        \begin{scope}[yshift=-3cm,xshift=8cm]
        \node at (0,-3) {};
        \draw[dashed] (0,-2) rectangle ++(1,1);
        \draw[dashed] (-2,-2) rectangle ++(1,1);
        \draw[dashed] (-1,-2) rectangle ++(1,1);
        
        \begin{scope}[xshift=0.125cm,yshift=-0.25cm]
        \draw[dashed,fill=white] (0,-2) rectangle ++(1,1);
        \draw[dashed,fill=white] (-2,-2) rectangle ++(1,1);
        \draw[dashed,fill=white] (-1,-2) rectangle ++(1,1);
        
        \begin{scope}[xshift=0.125cm,yshift=-0.25cm]
        \draw[dashed,fill=white] (0,-2) rectangle ++(1,1);
        \draw[dashed,fill=white] (-2,-2) rectangle ++(1,1);
        \draw[dashed,fill=white] (-1,-2) rectangle ++(1,1);
        \end{scope}
        \end{scope}
        \end{scope}
        
        \begin{scope}[xshift=14.25cm]
        \node at (-3,-1.5) {\begin{tabular}{c}
Train MLP on \\ {$[h_v^{(0)}\oplus h_v^{(1)}\oplus h_v^{(2)}]$}
        \end{tabular}};
        \end{scope}
    \end{tikzpicture}
    }
    \caption{Fixed Aggregation Features (FAFs) are calculated as a pre-processing step, concatenated to the input ($\oplus$), and fed to an MLP. 
    If the {aggregation function} 
    $\Phi$ is injective, the neighborhood information is preserved. The Kolmogorov-Arnold 
    representation 
    theorem ensures the existence of such a function, although simple {reducers} are empirically more amenable for optimization. 
    }
    \label{fig:mainfig}
\end{figure*}
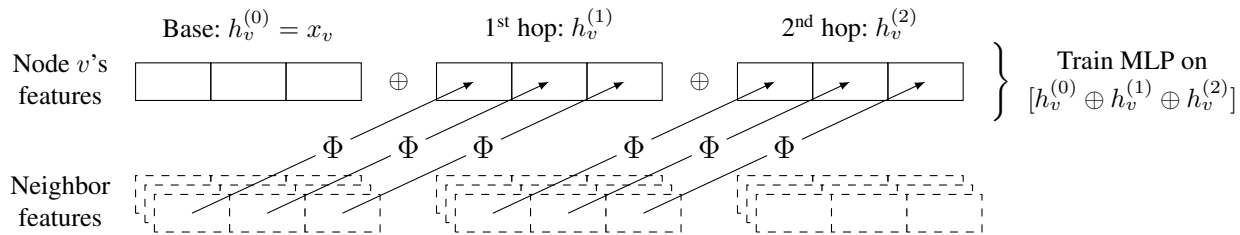

Surprisingly, we find that standard, untrained {aggregation operators}—like sum, mean, max, min, and std—though not information-preserving, yield useful features without any {learnable parameters}. 
Building on this observation, we propose Fixed Aggregation Features (FAF) (\S~\ref{s:FAF}):\ a training-free aggregation pipeline that applies fixed {aggregation functions—also referred to as ``reducers"—}over neighborhoods at multiple hops, concatenates the results into a tabular feature matrix, and then trains only a downstream classifier (e.g., an MLP). This data transformation brings several advantages:\ high interpretability (feature importance and ablations over hops/{reducers}), compatibility with the rich toolbox of tabular learning (designed to handle noise, class imbalance, feature selection, etc.), architectural flexibility, and reduced training compute.

Empirically, FAFs combined with well-tuned MLP classifiers are competitive on 12/14 common node-classification benchmarks, including citation \citep{Cora,Citeseer,Pubmed}, coauthor and Amazon co-purchase graphs \citep{shchur2018pitfalls}, Wikipedia \citep{wikics,heterophilous}, and other heterophilous datasets \citep{platonov2023critical}. Performance only really trails on Minesweeper and Roman-Empire, where the best GNNs rely on linear residual connections; in fact, 
the remaining gap aligns with the gains from the parameterized residuals reported by \citet{luo2024classic}. 
This pattern suggests that these datasets (Minesweeper and Roman-Empire) benefit from hop-specific aggregations or combinations of consecutive hops. 
Also, while these GNNs profit from many layers (10--15), the best-performing FAFs use only 2--4 hops.
On the other hand, why do FAFs perform on par for the other datasets?
Our theoretical analysis of the employed aggregation functions (\S~\ref{s:theory}) and our empirical findings (\S~\ref{s:experiments}) suggest that, for most benchmarks, the relevant signal is concentrated within hops 0–2; on hops 0–1, sum and mean preserve information. 
At higher hops, different {reducers} are complementary, but the information gain from min/max diminishes.

FAFs let us examine {datasets} from an optimization-first viewpoint without hard-to-interpret architectural factors (\S~\ref{s:advantages}). By turning neighborhoods into tabular features, we decouple representation from optimization and enable standard interpretability tools (e.g., feature importance) to identify which hop distances and {reducers} carry signal. Beyond revisiting the homophily–heterophily dichotomy or one-hop informativeness \citep{dichotomy,Zheng2024What,zheng2025let}, our method supports a richer characterization of interaction patterns:\ how signal varies across scales, which effects are additive vs.\ redundant, and where long-range dependencies appear. Moreover, gains reported by new {methods} could be dissected by constructing and comparing to their FAF counterparts. The tabular view also makes it natural to augment features, e.g., with network-science descriptors, or with neighborhood-masking features inspired by rewiring \citep{rubio-madrigal2025gnns,pmlr-v269-roth25a}. Furthermore, concurrent work has begun to explore a tabularization perspective to adapt tabular foundation models to graph data \citep{eremeev2025turning,choi2025tabpfncompetegnnsnode,hayler2025of}, a promising direction we help contextualize.

Together, our results suggest that many benchmarks do not require sophisticated learned aggregations, and that a large portion of GNN performance can be matched by powerful tabular predictors fed with fixed, transparent features.
Thus, FAFs can serve both as a \emph{strong baseline} and as a \emph{diagnostic tool} for graph benchmarks and methods. 
In summary, here are our main contributions:

    \textbf{1.} 
    \textbf{Theory:} We construct lossless neighborhood aggregations via Kolmogorov–Arnold representations, clarifying that learnability and numeric stability, not just expressiveness, govern practical success. We also analyze what information common {reducers} extract from neighborhoods, revealing information preservation at 0 and 1 hops and diminishing information with higher depth.

    \textbf{2.}
    \textbf{Method:} We introduce FAFs, which convert graph data into a tabular task by stacking fixed multi-hop aggregations, offering an interpretable framework to study the interplay between graph structure, features, and the task.

    \textbf{3.}
    \textbf{Empirical evidence:} FAFs match or exceed classic GNNs on 12/14 standard benchmarks. Our experiments corroborate our analysis, finding low-hop features to be more important and diminishing information at higher depth.

    \textbf{4.}
    \textbf{Implications:} Our findings question the necessity of learned neighborhood aggregation on current standard benchmarks, motivate strong tabular baselines for graph data, and open a path to more interpretable, efficient graph learning—and to designing harder benchmarks that genuinely benefit from learning aggregation.

\section{Related Work}

\textbf{Simplifying GNNs.}\quad
A growing body of work shows that much of a GNN's power can be retained—even improved for some tasks—when message passing is simplified or fixed. Early evidence comes from \citet{kipf2017semisupervised}, inspiring lines of work where aggregation layers are frozen or randomized:\ \citet{RR-GCN} are competitive on relational graphs with randomly transformed random features; \citet{Kelesis2025} analyze partially frozen GCNs, showing that fixing aggregation can mitigate over-smoothing and ease optimization; and GESN \citep{gesn} compute embeddings via a dynamical system with randomly initialized reservoir weights, after which only a linear readout is trained \citep{MICHELI2024127965}.

Another simplification comes from SGC \citep{sgcn}, which remove nonlinearities from a GCN, yielding a model equivalent to applying a low-pass graph filter followed by a linear readout. For link prediction \citep{qarkaxhija2024link}, SGCNs are found to be better than GCNs, but even removing the linear classifier\textemdash and thus all trainable parameters\textemdash provides a good baseline. 
Subsequent work has explored this idea further by precomputing propagation before training. 
SIGN \citep{sign_icml_grl2020} precomputes and concatenates features obtained from several fixed linear diffusion operators, then applies learned transformations to the representations---while discussing the need for exploring more expressive or wider local operators; GAMLP \citep{gamlp} and HOGA \citep{hoga} learn complex attention or gating mechanisms on them to decide which hops matter. 
These methods demonstrate separating propagation from prediction, though only as a way to improve scalability. Moreover, our FAFs are not limited to linear diffusions: reducers such as max, min, and std are nonlinear multiset operations, and several datasets require such reducers to reach their best performance.
Fixed aggregation is also used in APPNP \citep{gasteiger2018combining}:\ An MLP is first trained to produce node embeddings, and a subsequent Personalized PageRank–based propagation is applied; though the propagation itself is fixed, gradients flow through during backpropagation. And for graph {classification} on non-attributed graphs, \citet{cai2022simpleeffectivebaselinenonattributed} show that first-neighborhood statistics with an SVM form a surprisingly strong baseline, but performance lags on attributed graphs.

In contrast, we focus on {node classification} with rich node features. We aggregate and concatenate across {all} hops, and we place a more carefully tuned graph-agnostic classifier on top, all of which are necessary for our results\textemdash as shown in Tables~\ref{tab:testmain}, ~\ref{tab:logreg}, and \ref{tab:lasthop} (Appendix~\ref{app:mainexpperformance}, ~\ref{app:additional}).
We thus highlight the value of concatenating dependent but informative hop-wise features \citep{reddy2026when}, and the benefits of overparameterization—echoing evidence from random-feature models \citep{DONGHI2024128281}.

\textbf{Benchmarking GNNs.}\quad
Our work also connects to the growing literature on properly benchmarking GNNs and what constitutes a meaningful graph-learning dataset. Recent analyses warn that graph learning risks losing relevance without application-grounded benchmarks \citep{bechler-speicher2025position} and principled criteria for dataset quality beyond accuracy \citep{coupette2025no}. For graph {classification}, \citet{Errica2020A} show that, under controlled protocols, simple and even structure-agnostic baselines can rival complex GNNs, suggesting that architectures often fail to exploit graph structure. On the dataset side, \citet{bazhenov2025graphland} recently introduced industrial node property prediction benchmarks with graph-agnostic baselines. Their neighborhood feature aggregation (NFA), which augments tabular models with one-hop aggregated neighbor statistics, can be seen as a one-hop instance of our FAFs. We include preliminary results for four of their datasets in Appendix~\ref{app:graphland}.
In this context, FAFs serve as a simple stress test of whether proposed benchmarks genuinely benefit from learned message passing, and we argue that such well-tuned, fixed, multi-hop baselines should be routinely included when assessing new graph models and datasets.

\textbf{Tabularizing graphs for TFMs.}\quad
Concurrent work has explored converting node classification into a tabular prediction problem in order to leverage recent tabular foundation models (TFMs). \citet{eremeev2025turning} propose G2T-FM, which augments node features with one-hop NFA and structural embeddings before applying a tabular foundation model. \citet{choi2025tabpfncompetegnnsnode} similarly introduce TabPFN-GN, which constructs tabular features from node attributes, structural embeddings, and smoothed fixed aggregation. Finally, \citet{hayler2025of} uses ensembles of subsampled tables constructed from feature encodings, structural encodings, and labels. These works are similar in spirit to FAFs in that they expose graph information through tabular features, but differ in scope: they focus on adapting tabular foundation models for graph prediction, whereas our goal is to study how far simple fixed multi-hop aggregations with a carefully tuned graph-agnostic classifier can go, and what this implies about the need to learn neighborhood aggregation.

\textbf{GNN aggregation functions.}\quad
Classical message-passing GNNs differ primarily in how they aggregate neighbor features under permutation invariance. Sum, mean, and max are the canonical choices, with injectivity and stability trade-offs of each of them tied to their multiset representations \citep{xu2018how}. 
Beyond single operators, principal neighborhood aggregation (PNA) mixes several base {reducers} with degree-aware scalers to 
boost expressiveness and well-conditioning for continuous {reducers} \citep{Corso2020Principal}. 
Attention mechanisms instantiate learned weighted sums \citep{velickovic2018graph,brody2022how}, although it has been shown that they suffer from trainability problems, including small relative gradients on the attention parameters, slowed-down layer-wise training speed, and
the inability to mute neighbors \citep{mustafa2023are,mustafa2024dynamic,mustafa2024gate}.
Our perspective is complementary:\ We study fixed {reducers} whose strength comes from (i) their information preservation and (ii) their separability by a powerful downstream classifier.
This decoupling clarifies what must be learned (the readout) versus what can be fixed (the propagation), and it aligns with our empirical finding that stronger, well-tuned classifiers capitalize on rich, concatenated neighborhood views. Because of this, we argue that good optimization and learnability are as important as expressivity results. In line with this argument, \citet{NEURIPS2022_9e9f0ffc} show that, for tabular data, having the right embeddings for continuous features is key to closing the gap between transformer-like architectures and feed forward networks, proposing a lossless piecewise linear embedding to improve the trainability of the latter.

\textbf{Kolmogorov-Arnold theorem.}\quad
The Kolmogorov–Arnold representation \citep{kolmogorov1957representation} admits several equivalent formulations that reduce multivariate functions to compositions of univariate functions. 
Recent architectures explicitly instantiate such decompositions with learnable spline-based univariate components and linear mixing \citep{liu2025kan,carlo2025kolmogorovarnold}. In contrast, we use a specific fixed-aggregation formulation with predetermined aggregation weights and a fixed univariate encoding \citep{SCHMIDTHIEBER2021119}, so that any multivariate function $f$ can be learned exclusively from a single univariate readout $g$ applied to a fixed weighted sum of univariate transforms.

\section{Fixed Aggregation Features}\label{s:FAF}

Node neighborhoods can be compressed into single-node features, eliminating the need to learn feature embeddings before every layer of message passing. Our approach, Fixed Aggregation Features (FAFs), recursively constructs and concatenate features via a set of reducers $\mathcal{R}\subset\{\text{mean},\text{sum},\text{max},\text{min},\text{std},\ldots\}$ in the following way:
\begin{equation}\label{eq:fafs}
h_v^{(0,r)} = x_v,\quad
h_v^{(k,r)} = r\!\left(\{\,h_u^{(k-1,r)} : u\in N(v)\,\}\right),
\end{equation}
where $k\in\{1,\ldots,K\}$ and $r\in\mathcal{R}$. We then train an MLP on the concatenated representation 
\begin{equation}\label{eq:fafs-mlp}
z_v = x_v\oplus\left(\bigoplus{}_{r\in\mathcal{R}}\bigoplus{}_{\ k\in\{1,\ldots,K\}}\left(h_v^{(k,r)}\right)\right)
\end{equation}
with input dimensionality $|x_v|\cdot (1+|\mathcal{R}|\cdot K)$ per node $v$. Figure~\ref{fig:mainfig} illustrates the case for one reducer $\mathcal{R}=\{\Phi\}$ with $K=2$.  If the reducers are injective, then the neighborhood information at each depth is preserved in $z_v$. This waives the need for aggregating learned embeddings in GNNs, thus transforming graph data into high-dimensional tabular data. Our analysis explains why this is theoretically possible (\S~\ref{s:theory}). Additionally, in our experiments (\S~\ref{s:experiments}), we show that MLPs trained on FAFs can match the performance of classic GNNs on most standard node-classification benchmarks and, by comparing with \citet{luo2024classic}, of Graph Transformers and heterophily-aware models.

\subsection{Advantages of Tabular over Graph Representations}
\label{s:advantages}

\textbf{Interpretability.}\quad
Our construction concatenates each node’s original features with $K$-hop neighborhood statistics and feeds this expanded representation to a tabular classifier. This setup explicitly separates the feature and hop aggregation factors, allowing us to assess their individual contributions using the widely-used toolbox for tabular interpretability. 
For instance, we can analyze effects across hops by examining feature importance of the MLP.
We compute Shapley Additive Explanations (SHAP) \citep{shap} on Minesweeper with mean aggregation (Fig.~\ref{fig:shap-mine}), one of the two datasets where FAFs lag. 
In this dataset, labels are bombs, feature 0 masks other features, and features 1–6 {one-hot} encode the number of neighboring bombs. The top signal is the hop-1 mean of feature 1, i.e., the fraction of neighbors whose local bomb count is {null. When this proportion is greater than zero, the model knows that the node cannot have a bomb; when it is zero, all its neighbors observe bombs so there is a possibility of having a bomb.} This heuristic does not completely solve the problem, as neighbors {can be merely observing second-hop bombs}, creating ambiguity that likely causes residual errors. The model also correctly gives importance to the number of masked neighbors (hop-0 feature 0). These attributions clarify where the model succeeds and where it fails. We also report SHAP importances for two other datasets in Fig.~\ref{fig:shap} in Appendix~\ref{app:additional}:\ Pubmed (homophilic) and Amazon-Ratings (heterophilic).

\begin{figure}[b]
    \centering
    \includegraphics[width=\linewidth]{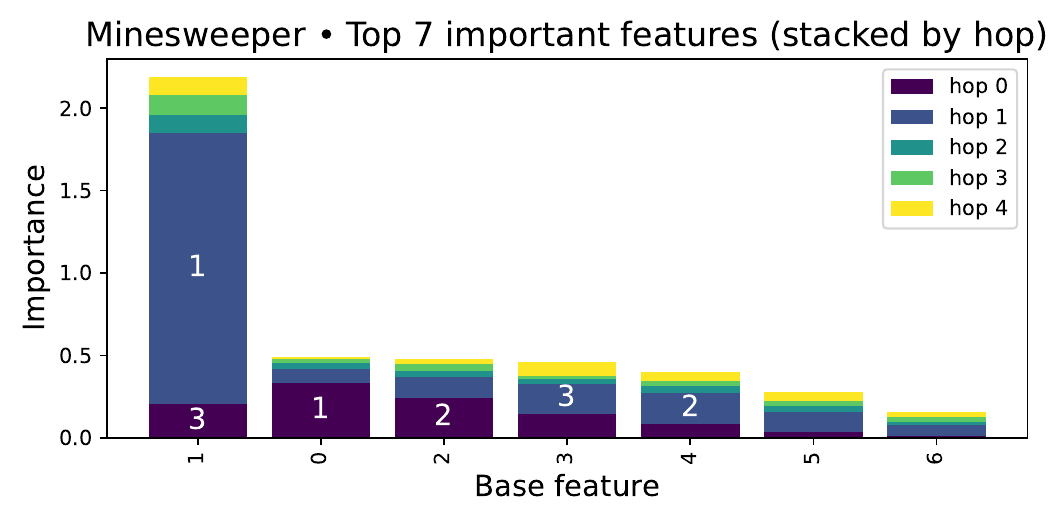}
    \caption{SHAP feature importance for Minesweeper, stacked by hop. Numbers show features' rankings per hop.}
    \label{fig:shap-mine}
\end{figure}

Moreover, rather than explaining a particular classifier, one can aim to explain the dataset, localizing which hops and features carry signal independent of model choice. Following \citet{donnelly2023the}, noisy tabular datasets often admit a ``Rashomon set'' of comparably well-performing models. Accordingly, feature importance is better assessed over this set—preferably constrained to simpler or sparser models—than from a single fit.
This lens offers a principled way to interrogate graph data beyond feature homophily–heterophily \citep{Zheng2024What} or strictly one-hop neighborhoods \citep{zheng2025let}. 

\textbf{Optimization.}\quad
GNNs usually exhibit early
overfitting, where training accuracy converges almost immediately while validation and test accuracy plateau or even decay. 
Thus, the best validation accuracy is often achieved before relevant aggregations are learned.
This might partially explain why FAFs can often compete with trained GNNs:\ They avoid overfitting aggregations.
GNNs can also suffer from ineffective aggregation learning \citep{mustafa2023are,mustafa2024gate}, so their potential to outcompete FAFs is likely underexplored due to trainability issues. 
By contrast, optimization on tabular data (like FAFs) is better tractable and understood by standard toolkits. 

\textbf{Efficiency.}\quad
Precomputing aggregation and then training an MLP is more scalable than repeatedly running message passing layers and backpropagating through them, as required in GNNs. However, as the number of reducers, original features, and hops in FAF increases, so does the input dimensionality, which in turn enlarges the parameter count of the MLP’s first layer. This issue could be mitigated through common feature reduction techniques. For the original features, we report the average training runtimes of our FAF and GNN models in Table~\ref{tab:runtimes} (Appendix~\ref{app:experimentalsetup}). FAFs are generally more efficient, particularly when using a single reducer.

\textbf{Feature augmentations.}\quad
Adding more features does not improve accuracy monotonically; beyond a point, some feature selection is needed. Still, the tabular view lets us concatenate diverse features atop the aggregations, which serve as additional informative covariates \citep{reddy2026when}. This can, in principle, include degree, centrality, or other node-level structural encodings \citep{10.1145/3770854.3785701}.
Our framework is also compatible with pre-processing graph rewiring, where aggregation is computed on a modified adjacency matrix to fight over-squashing \citep{topping2022understanding, jamadandi2024spectral}. But unlike standard rewiring, we can concatenate the rewired features instead of replacing the originals. 
As fixed aggregations can suffer from similar issues as trainable GNNs, FAFs can also benefit from proposed remedies. 
To extract more precise information from complex environments, we examine a feature similarity-based rewiring based on \citet{rubio-madrigal2025gnns}, 
where edges of negative feature cosine similarity between nodes are dropped. 
We then append features aggregated on the rewired graph, or split edges into positive/negative sets and aggregate separately, inspired by computational-graph splitting that fights over-smoothing \citep{pmlr-v269-roth25a}.
Results in Table~\ref{tab:rewiring} (Appendix~\ref{app:additional}) show that on most datasets, concatenating these extra features yields larger gains than using them as substitution.
These augmentations not only improve performance but also help disentangle where the gains come from:\ additional extracted signal versus changes to the optimization of graph models. This is akin to analyses of SGC and GESN \citep{MICHELI2025134}, though in our case we obtain benefits from the operations.
We therefore advocate FAF constructions as {baselines for methods that modify the aggregation component of GNNs}.

\section{
Does Aggregation Need Learning?}\label{s:theory}
Let $G=(V,E)$ be a graph with node features ${\mathcal{X}}\in\mathbb{R}^{F}$. A neighborhood function is a map $f:\mathcal{M}(\mathbb{R}^{F})\to\mathbb{R}$, where $\mathcal{M}(\mathbb{R}^{F})$ denotes the set of finite multisets of vectors in $\mathbb{R}^{F}$, acting on the neighborhood multiset $X_v:=\{\mathcal{X}_u:u\in N(v)\}$ for $v\in V$. We treat $v\in N(v)$ when self-loops are added, depending on the setup. 
We seek a fixed encoder $\Phi$ such that we can learn any neighborhood function via a univariate map $g$ with $f=g\circ\Phi^{-1}$. This enables tabular classifiers to learn graph data losslessly.

\textbf{What is preserved by standard aggregations?}\quad
FAFs apply a transformation of neighborhoods that is not learnable, which raises the question of what information gets lost by the aggregation. 
Permutation invariant aggregations treat graph neighborhoods as multisets consisting of feature vectors of neighbors.
Accordingly, they extract distribution information and forget about the identity of specific neighbors.
This property is usually regarded as helpful inductive bias and therefore of no concern. 
Our next theorems analyze which information is preserved by sum and mean aggregations from these multisets.
To do so, we first generalize Lemma~5 by \citet{xu2018how} for one-hot encoded discrete features to orthogonal features.
Combined with the fact that hop features are concatenated, this insights establishes that information from the 1-hop neighborhood can be preserved.

\begin{theorem}[1-hop aggregation] \label{thm:onehop}
Assume the features $\mathcal{X}$ are orthogonal. 
Then, the function $h(X) = \sum_{x \in X} x$ {defined on} multisets $X \subseteq \mathcal{X}$ of bounded size is {injective}.
Moreover, any multiset function $f$ can be decomposed as $f(X) = g\!\left(\sum_{x \in X} x\right)$ for some function $g$.
\end{theorem}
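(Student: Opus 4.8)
The plan is to reduce the claim to a one-line linear-algebra fact: pairwise orthogonal nonzero vectors are linearly independent. First I would fix notation. Since the feature values in $\mathcal{X}$ are pairwise orthogonal and nonzero (nonzeroness is implicit here: a repeated zero feature would change no sum and immediately break injectivity), there are at most $F$ of them; write the distinct values as $v_1,\dots,v_m$ with $m\le F$. Any finite multiset $X$ over $\mathcal{X}$ is then losslessly encoded by its multiplicity vector $c(X)=(c_1,\dots,c_m)\in\mathbb{Z}_{\ge 0}^m$, and $h(X)=\sum_{i=1}^{m}c_i v_i$.

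For injectivity, suppose $h(X)=h(X')$. Then $\sum_i (c_i-c_i')v_i=0$, and linear independence of the $v_i$ forces $c_i=c_i'$ for every $i$, i.e.\ $c(X)=c(X')$ and hence $X=X'$. In fact each multiplicity is recovered by a single inner product, $c_i=\langle h(X),v_i\rangle/\|v_i\|^2$, which both exhibits an explicit left inverse $h^{-1}$ on $\operatorname{Im}(h)$ and shows that the size bound plays no role in injectivity itself; boundedness is only there to guarantee that every $X$ in the domain is a genuine finite multiset (so that $h$ is even defined) and, as in Lemma~5 of \citet{xu2018how}, that the multiplicity vectors live in a fixed discrete range.

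For the decomposition, injectivity makes $h$ a bijection from the domain $\mathcal{D}$ of bounded-size multisets onto its image $h(\mathcal{D})$. I would then set $g:=f\circ h^{-1}$ on $h(\mathcal{D})$ and extend $g$ arbitrarily (say by $0$) to the rest of $\mathbb{R}^F$; by construction $f(X)=g(h(X))$ for every $X\in\mathcal{D}$, which is the claimed decomposition. Since hop features are concatenated in \eqref{eq:fafs-mlp}, the same device applied to $x_v\oplus h^{(1)}_v$ then shows the $1$-hop neighborhood is recoverable alongside the ego features.

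The argument has essentially no obstacle: it is linear independence plus the standard ``invert on the image, extend off it'' trick. The only points needing care are (i) making the nonzero-feature hypothesis explicit, since it hides inside the word ``orthogonal'', and (ii) being candid that the $g$ obtained this way is merely set-theoretic — it need not be continuous, Lipschitz, or numerically stable, which is exactly the expressiveness-versus-learnability gap that \S\ref{s:theory} is built around.
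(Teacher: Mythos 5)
Your proposal is correct and takes essentially the same route as the paper's proof: both recover the multiset's multiplicity counts from $h(X)$ by taking inner products with the orthogonal feature vectors, and both then define $g$ as $f$ composed with this inversion of $h$ on its image. Your write-up is in fact slightly more careful than the paper's (normalizing by $\|v_i\|^2$ instead of implicitly assuming unit-norm features, and making the nonzero-feature requirement explicit), but the underlying argument is identical.
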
 
The proof is given in Appendix~\ref{app:one-hop}. 
Note that a multiset $X \subseteq \mathcal{X}$ is characterized by the count $n_x$ of elements that have specific features $x$.
These counts can also be extracted from the sum $h(X)$ (as demonstrated in the proof).
Consequently, any multiset function $f$ transforms such counts by $f(n_x)$.
The function $g$ would thus first extract the counts from the sum $h(X)$ and then apply $f$ to the counts.
If the features of a node $v$ include its degree $d_v$, then mean aggregation contains the same information, as a classifier can learn to multiply $h(X) = 1/d_v\sum_{x \in X} x$ by $d_v$.
In contrast, max and min aggregations extract whether at least one neighbor has a specific feature property.
They focus on the tails of distributions rather than full neighborhoods.

\textbf{Information loss for $k$-hops.}\quad
One might hope that the above theorem also applies to aggregations from hop $k$ to $k+1$.
The orthogonality assumption, however, is essential and no longer met by the aggregated neighbor features ${h^{(k)}_n}$ for $k \geq 1$.
As a consequence, from $k \geq 2$, not all information about the distribution of features across neighbors is preserved, as Figure~\ref{fig:tree-example} exemplifies in Appendix~\ref{app:second-hops}.
In particular, ${h^{(2)}_1}$ captures neither the degrees of its neighbors, nor the number and types of second-hop neighbors associated with each first-hop neighbor. 
Even so, aggregation still extracts useful information, and different aggregations concatenate complementary properties of neighborhoods:

1.
{Sum aggregation:}
Sums count, for each of the $n$ distinct orthogonal feature vectors $x_f$, how many nodes in the $k$-hop neighborhood exhibit feature $f$. 
A classifier can extract it  by computing $x^T_f h^{(k)}_v$.
Note that nodes reachable through multiple length-$k$ paths are counted multiple times.

2. 
{Mean aggregation:}
Means can partially distinguish neighbors with different degrees by considering the fraction of nodes that exhibit a specific feature vector $x_f$. The quantity
$x^T_f h^{(k)}_v$ weights each node $i$ with feature $x_f$ by $1/d_i$.
Note that nodes reachable through multiple length-$k$ paths are again counted with multiplicity.

3. 
{Max/Min aggregation:}
Max aggregation on one-hot encoded features returns whether at least one node within $k$ hops has a given feature. For large neighborhoods as $k$ increases, this indicator quickly saturates, so increasing hops adds little further information.
The same reasoning applies when taking the maximum entry of the orthogonal features.
It also applies to the minimum:\
It indicates whether any node within $k$ hops lacks the feature, and increasing $k$ adds little further information.

\textbf{Lossless neighborhood aggregation.}\quad
When node features are real-valued in general, \citet{Corso2020Principal} show that no single \emph{continuous}, permutation-invariant aggregation function can be lossless for all multiset functions. This mirrors a classical topological obstruction (Netto, 1879):\ There is no continuous bijection $\mathbb{R} \!\to\! \mathbb{R}^2$ \citep{DAUBEN1975273}. 
However, there can exist discontinuous bijections, namely space filling functions. 
We adopt a concrete construction based on ternary expansions and the Cantor set, adapted from a Kolmogorov-Arnold representation variant from Theorem~2 by \citet{SCHMIDTHIEBER2021119}.

\begin{theorem}[Thm.~2 of \citet{SCHMIDTHIEBER2021119}]
\label{thm:ka}
For any fixed $d \ge 2$, there exists a monotone function $\phi:[0,1]\to\mathcal{C}$ (the Cantor set) such that the map
$
\Phi(x_1,\dots,x_d)\;=\;3\sum_{p=1}^{d} 3^{-p}\,\phi(x_p) 
$
is injective on $[0,1]^d$. Moreover, for every continuous $f:[0,1]^d\to\mathbb{R}$ there exists a continuous $g:\Phi([0,1]^d)\to\mathbb{R}$ with $f(x_1,\ldots,x_d) \;=\; g\!\left(\Phi(x_1,\ldots,x_d)\right).$

\end{theorem}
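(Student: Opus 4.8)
\textbf{Proof plan for Theorem~\ref{thm:ka} (Thm.~2 of \citet{SCHMIDTHIEBER2021119}).}
The plan is to build $\phi$ via digit manipulation in base $3$, so that the images of the coordinates occupy disjoint ``slots'' of the ternary expansion of $\Phi(x_1,\dots,x_d)$, and then to recover each $x_p$ by reading off its slot. Concretely, I would write each $x_p\in[0,1]$ in binary as $x_p=\sum_{j\ge 1} b_{p,j}2^{-j}$ with $b_{p,j}\in\{0,1\}$ (fixing a convention for dyadic rationals to make the expansion unique), and define $\phi(x_p)=2\sum_{j\ge 1} b_{p,j}3^{-j}$; this lands in the Cantor set $\mathcal{C}$ because all ternary digits are in $\{0,2\}$, and $\phi$ is monotone since it is the composition of the order-preserving binary-to-ternary digit map. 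The factor $3$ and the weights $3^{-p}$ in $\Phi$ are chosen so that, after expanding $3\sum_{p=1}^d 3^{-p}\phi(x_p)=2\sum_{p=1}^d\sum_{j\ge1} b_{p,j}3^{-(p-1)-\,?}$, the digit triples interleave without carrying: since every $\phi(x_p)$ has ternary digits in $\{0,2\}$, summing $d$ of them scaled by distinct powers of $3$ produces a number whose ternary digits lie in $\{0,2\}$ and whose position $(p,j)$ encodes $b_{p,j}$. The key structural fact to verify is this \emph{no-carry} property, which makes the digit-reading inverse well defined on $\Phi([0,1]^d)$; injectivity of $\Phi$ is then immediate.

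For the representation part I would argue as follows. Because $\Phi$ is a continuous injection from the compact set $[0,1]^d$ onto $K:=\Phi([0,1]^d)\subset\mathcal{C}$, it is a homeomorphism onto its image, so $\Phi^{-1}:K\to[0,1]^d$ is continuous. Given any continuous $f:[0,1]^d\to\mathbb{R}$, set $g_0:=f\circ\Phi^{-1}:K\to\mathbb{R}$, which is continuous on $K$ and satisfies $f=g_0\circ\Phi$ by construction. If one wants $g$ defined on all of $\Phi([0,1]^d)$ (which is exactly $K$) this already suffices; if instead a continuous extension to a larger domain is desired, invoke the Tietze extension theorem, using that $K$ is a closed subset of $[0,1]$ (it is compact) to extend $g_0$ to a continuous $g:[0,1]\to\mathbb{R}$, and note that on $K$ the identity $f=g\circ\Phi$ persists. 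Either way, the existence claim follows without any quantitative modulus-of-continuity bookkeeping.

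The step I expect to be the main obstacle is making the \emph{no-carry} digit argument fully rigorous, because the naive binary expansion is not unique at dyadic rationals, and one must check that the chosen tie-breaking convention (say, always the terminating expansion) is compatible with monotonicity of $\phi$ and does not create collisions in $\Phi$ when two different tuples have coordinates at dyadic points. The clean fix is to route everything through the $\{0,2\}$-ternary (Cantor) coding first: define $\phi$ so that its image is literally $\{\,2\sum_j c_j 3^{-j} : c_j\in\{0,1\}\,\}$, observe that each element of $\mathcal{C}$ has a \emph{unique} $\{0,2\}$-expansion, and then verify that scaling by $3\cdot 3^{-p}$ shifts the Cantor-digit blocks into non-overlapping windows, so that the sum's ternary digits are exactly the concatenation of the individual blocks with no interaction. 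A secondary, more minor point is confirming monotonicity of $\phi$: since the binary digits of $x_p$ are read most-significant-first and mapped to the same positions in base $3$ with the order-preserving substitution $0\mapsto 0,\,1\mapsto 2$, lexicographic order is preserved, hence $\phi$ is nondecreasing; continuity of $\Phi$ on $[0,1]^d$ then follows because, although $\phi$ itself is only monotone (not continuous), $\Phi$ is a uniform limit of the continuous partial sums and one checks directly that small changes in $x_p$ move $\Phi$ by a controlled amount once the dyadic-rational ambiguity is handled. I would present the digit bookkeeping as a short lemma and keep the topological half (homeomorphism onto image, then Tietze) as a two-line corollary.
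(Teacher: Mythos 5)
Your plan reconstructs the right kind of argument — and note the paper itself does not prove this statement but imports it from \citet{SCHMIDTHIEBER2021119}, whose proof is indeed a digit-interleaving construction followed by $g:=f\circ\Phi^{-1}$ — but as written it has two genuine gaps. First, the specific $\phi$ you propose, $\phi(x)=2\sum_{j\ge1}b_j3^{-j}$ from the binary digits of $x$, does not make $\Phi$ injective: the weights $3\cdot3^{-p}$ shift the ternary digit block of coordinate $p$ by only $p-1$ positions, so the digit supports of different coordinates overlap instead of occupying disjoint slots, and the ``no-carry'' property you flag as the key step fails. Concretely, for $d=2$ one has $\Phi(x_1,x_2)=\phi(x_1)+\tfrac13\phi(x_2)$ and $\Phi\!\left(\tfrac14,0\right)=2\cdot3^{-2}=\tfrac29=\tfrac13\cdot\tfrac23=\Phi\!\left(0,\tfrac12\right)$, so injectivity is false outright for your $\phi$. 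The repair is that $\phi$ must depend on $d$ (which the statement permits): spread the binary digits $d$ slots apart, e.g.\ $\phi(x)=2\sum_{j\ge1}a_j\,3^{-(d(j-1)+1)}$ for $x=\sum_j a_j2^{-j}$ with a fixed expansion convention. Then $3\cdot3^{-p}\phi(x_p)$ places coordinate $p$'s digits exactly at the ternary positions congruent to $p$ modulo $d$, every position of $\Phi(x_1,\dots,x_d)$ receives exactly one digit from exactly one coordinate, there are no carries, and injectivity and monotonicity follow as you intend.

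Second, your continuity argument for $g$ rests on $\Phi$ being continuous (``continuous injection from a compact set is a homeomorphism onto its image'', ``$\Phi$ is a uniform limit of continuous partial sums''). This premise is false: $\phi$, and hence $\Phi$, is discontinuous at points with a dyadic-rational coordinate — the paper explicitly emphasizes that $\Phi$ is \emph{not} continuous while its inverse is — and consequently $\Phi([0,1]^d)$ need not be closed, so the Tietze step is also unjustified as stated (though it is not needed, since $g$ is only required on $\Phi([0,1]^d)$). The correct route is to prove continuity of $\Phi^{-1}$ directly from the digit encoding: for numbers with ternary digits in $\{0,2\}$, a first digit disagreement at position $k$ forces a gap of at least $2\cdot3^{-k}-\sum_{j>k}2\cdot3^{-j}=3^{-k}$, so two image points within $3^{-m}$ of each other share their first $m$ ternary digits; by the interleaving, each decoded coordinate then shares its leading $\approx m/d$ binary digits, so the preimages are within $\approx 2^{-m/d}$. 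Hence $\Phi^{-1}$ is uniformly (indeed H\"older) continuous on $\Phi([0,1]^d)$, and $g:=f\circ\Phi^{-1}$ is continuous there, which is exactly the mechanism by which \citet{SCHMIDTHIEBER2021119} also transfers moduli of continuity from $f$ to $g$.
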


Theorem~\ref{thm:ka} isolates all required discontinuity into a \emph{fixed} aggregation. 
While $\Phi$ is not continuous, its inverse is, which makes the learnable $g := f \circ \Phi^{-1}$ inherit the continuity properties of $f$.
\citet{SCHMIDTHIEBER2021119} has also quantified how much information is lost if $g$ is learned instead of $f$.
For $f$ $\beta$-smooth with $\beta\leq1$, there is no difference in the approximation rate. However, for higher order smoothness, the multivariate and univariate function approximation may vary. Note that this aggregation even remembers node identities. From this theorem we can learn the following insight:

\begin{mdframed}[
  innerrightmargin=4pt,
    innerleftmargin=4pt,
  ]
A lossless, fixed, even univariate neighborhood aggregation function exists, but it has to be discontinuous for general continuous features. 
\end{mdframed}

\textbf{Implications and open challenges.}\quad
When we encode neighborhoods via $\Phi$ and learn $g$ so that $f = g \circ \Phi ^{-1}$, the information content, smoothness, and approximation rates of the neighborhood function $f$ transfer to $g$.
However, this theoretical sufficiency does not guarantee strong empirical performance when $\Phi$ is used as a {reducer} for FAF (see Table~\ref{tab:fafka}, Appendix~\ref{app:additional}). In Appendix~\ref{app:cantor}, we visualize how $\Phi$ maps 2D circles into the univariate Cantor set, and how $\Phi^{-1}$ can recover them continuously. We also compare against mean and std, and observe that $\Phi$ pushes inputs that are close together into far-apart representations, whereas mean and std bring together far-apart inputs that share commonalities. It is the case that, in practice, the simple statistics studied above often provide distributional summaries that downstream classifiers exploit more effectively.

An ideal aggregation would be both injective, like $\Phi$, and would extract useful statistical insights, like mean. 
It is still an open challenge to design, or potentially learn, efficient embeddings that extract relevant information from neighborhoods, while easing the learning problem for the classifier \citep{uniExist}.
One might expect GNNs to learn such representations end-to-end without overfitting.
Our experiments with FAFs (Table~\ref{tab:testmain2}) suggest—despite some information loss at iterative hops—that simple {reducers} suffice for most standard node-classification benchmarks. 

Experimentally, we find that mean aggregation alone is often among the top performers.
This suggests that neighborhood distributions provide most task-relevant signal, and that neighbor degrees encode useful structural information, helping to distinguish the contribution of distinct neighbors.
Consistent with this, the most relevant information is already provided in the immediate neighborhood ($k=1$--$2$, see Table~\ref{tab:locallayers} on increasing hops) and the concatenation of this information is key so that it is not lost by repeated aggregations (see Table~\ref{tab:lasthop} on only using the last hop).
Consequently, information loss at larger $k$ 
is of little practical concern—except for two datasets that appear to require subtler information. 
Taken together, these observations motivate the following hypothesis:
\begin{mdframed}[
  innerrightmargin=4pt,
    innerleftmargin=4pt,
  ]
Hypothesis:\ For most standard node-classification benchmarks, either the predictive signal is already concentrated within the first one or two hops, or current GNNs struggle to learn layer-wise aggregations that extract relevant information beyond mean or sum.
\end{mdframed}

The first part of this hypothesis underscores the need for more real-world datasets where long-range interactions and richer aggregations matter, supporting prior calls to revisit benchmark design \citep{Errica2020A,bechler-speicher2025position}. Although some tasks like Roman Empire benefit from long-range signal, making deep graph models work reliably remains a challenge. Recent evidence \citep{zhou2025glora} indicates that graph models generally struggle to capture interactions beyond 13 hops, irrespective of over-smoothing, over-squashing, or vanishing gradients.

The second part of the hypothesis concerns the ability of GNNs to actually realize useful aggregations in practice. For instance, GNNs may not move far enough from their initializations. Indeed, the two datasets on which GNNs hold an advantage require linear residual transformations to realize that gap \citep{luo2024classic}. Prior work also shows that GATs cannot flexibly adjust attention to shut off unhelpful neighbors \citep{mustafa2024gate}. This supports our results on rewiring the adjacency matrix before aggregation, as shown in \S~\ref{s:advantages}. If GATs could learn to prune the edges that we manually drop, they would enjoy similar gains. 

\textbf{Future work.}\quad
We see opportunity for future work along three fronts that build directly on our findings: 
    
    1. 
    Feature/reducer engineering: FAFs highlight untapped potential for designing meaningful node features that encode graph structure, require less learning, potentially preserve more—but ideally only relevant—information, and allow for higher learning efficiency. In combination with partial feature learning, they might form the basis of a new generation of graph based learning architectures.
    
    2. 
    Moving with and beyond injectivity: As our theory and experiments highlight, improving GNN expressiveness and thus injectivity alone is not likely to inspire practical improvements on current benchmarks, as those can be competitively solved with simple, non-injective aggregation. We therefore call for a shift in focus from mere injectivity to other learning properties—a gap to be addressed not only for FAFs but for GNNs in general.
    
    3.
    New benchmarks: Enough information to solve current benchmark tasks is already contained in early hops and can be extracted with simple, non-injective aggregation. If we really want to showcase the capabilities of GNNs to learn meaningful features, we need more difficult benchmarks that require this ability.

\begin{mdframed}[
  innerrightmargin=4pt,
    innerleftmargin=4pt,
  ]
In theory, fixed information-preserving aggregations can reduce graph learning to tabular prediction. In practice, task relevant representations and information preservation are a challenge. Progress likely requires both more amenable {reducers} and better tasks for evaluation.
\end{mdframed}

\section{Experiments}\label{s:experiments}

\begin{table*}[t]
\caption{Test accuracy on node classification: FAFs against classic GNNs.}
\label{tab:testmain2}
\centering\resizebox{0.95\linewidth}{!}{%
\begin{tabular}{lccccccc}
\toprule
Dataset & computer & photo & ratings & chameleon & citeseer & coauthor-cs & coauthor-physics \\
\midrule
GCN & 93.58 ± 0.44 & 95.77 ± 0.27 & 53.86 ± 0.48 & \textbf{44.62 ± 4.50} & \textbf{72.72 ± 0.45} & 95.73 ± 0.15 & \textbf{97.47 ± 0.08} \\
GAT & \underline{93.91 ± 0.22} & \underline{96.45 ± 0.37} & \textbf{55.51 ± 0.55} & 42.90 ± 5.47 & \underline{71.82 ± 0.65} & \underline{96.14 ± 0.08} & \underline{97.12 ± 0.13} \\
SAGE & 93.31 ± 0.17 & 96.17 ± 0.44 & \underline{55.26 ± 0.27} & 43.11 ± 4.73 & \underline{71.82 ± 0.81} & \textbf{96.21 ± 0.10} & 97.10 ± 0.09 \\
FAF$_{\rm best val}$ & \textbf{94.01 ± 0.21} & \textbf{96.54 ± 0.13} & 55.09 ± 0.24 & \underline{42.96 ± 2.45} & 70.48 ± 1.24 & 95.37 ± 0.17 & 97.05 ± 0.18 \\
\bottomrule
\end{tabular}%
}

\medskip 

\centering\resizebox{0.95\linewidth}{!}{%
\begin{tabular}{lccccccc}
\toprule
Dataset & cora & minesweeper & pubmed & questions & roman-empire & squirrel & wikics \\
\midrule
GCN & \textbf{84.38 ± 0.81} & \underline{97.48 ± 0.06} & \underline{80.00 ± 0.77} & \underline{78.44 ± 0.23} & \textbf{91.05 ± 0.15} & \underline{44.26 ± 1.22} & 80.06 ± 0.81 \\
GAT & 83.02 ± 1.21 & 97.00 ± 1.02 & 79.80 ± 0.94 & 77.72 ± 0.71 & 90.38 ± 0.49 & 39.31 ± 2.42 & \textbf{81.01 ± 0.23} \\
SAGE & \underline{83.18 ± 0.93} & \textbf{97.72 ± 0.70} & 77.42 ± 0.40 & 76.75 ± 1.07 & \underline{90.41 ± 0.10} & 40.22 ± 1.47 & \underline{80.57 ± 0.42} \\
FAF$_{\rm best val}$ & 82.84 ± 0.63 & 90.00 ± 0.39 & \textbf{80.96 ± 1.06} & \textbf{78.69 ± 0.50} & 78.11 ± 0.38 & \textbf{44.59 ± 1.62} & 80.25 ± 0.34 \\
\bottomrule
\end{tabular}%
}
\end{table*}

\textbf{Setup.}\quad
We report test
performance for GCN \citep{kipf2017semisupervised}, GAT \citep{velickovic2018graph}, and GraphSAGE \citep{graphsage} versus our approach, which feeds our Fixed Aggregation Features (FAFs) into MLPs (Figure~\ref{fig:mainfig} and Eqs.~\ref{eq:fafs}, \ref{eq:fafs-mlp}), in Table{~\ref{tab:testmain2}. We aggregate up to the same hop as the GNN baselines. {We obtain the best FAF variant from validation results, which are included in Table~\ref{tab:valmain} (Appendix~\ref{app:mainexpperformance}).} FAF$_4$ uses the reducers \{mean, sum, max, min\}, and is tuned with the hyperparameter grid from \citet{luo2024classic}; this makes our results directly comparable to their Graph Transformers and heterophily-aware architectures, where they find that classic GNNs can also rival them. Additional FAF variants include mean+std, mean only, max+std, max only, sum only, and std only. The best overall result is shown in \textbf{bold}, the second best is \underline{underlined}. More details on the experimental setup are given in Appendix~\ref{app:experimentalsetup}. Detailed accuracy values of all FAF variants are in Appendix~\ref{app:maintables}. Training, validation, and test curves of all datasets for FAF$_4$ and GCN are shown in Fig.~\ref{fig:curves} in Appendix~\ref{app:curves}. Ablation results are included in Appendix~\ref{app:additional}. 

\textbf{Comparison to classic GNNs.}\quad
Overall, we improve on 5 datasets, match within error or {1\% on another 5}, and trail on {4}. On most datasets, FAF$_4$ performs comparably to mean+std. {Among the ones within 1\%, we have} Coauthor-CS and Coauthor-Physics (Figs.~\ref{fig:curve-coauthor-cs},~\ref{fig:curve-coauthor-physics}), which are the largest and most feature-rich; targeted feature selection may close the gap. Among the 4 trailing datasets, two are homophilic and two heterophilic; the homophilic tasks are close to parity. Citeseer exhibits optimization instability (Fig.~\ref{fig:curve-citeseer}), {and Cora has a large test-validation gap in GCNs (Fig.~\ref{fig:curve-cora})}, not present in any other. The two heterophilic datasets, Minesweeper and Roman-Empire (Figs.~\ref{fig:curve-minesweeper},~\ref{fig:curve-roman-empire}) show larger performance drops. This behavior mirrors the decrease reported by \citet{luo2024classic} when residual connections are removed. Notably, the best-performing FAFs on these two datasets use far fewer hops (4 and 2) than the GNN baselines (15 and 10), suggesting that key signal lies at longer ranges. The shallower FAFs under-aggregate relative to what those tasks require, but adding extra hops does not provide extra information, as discussed in \S~\ref{s:theory}. We also show it in Table~\ref{tab:locallayers}, where we concatenate up to different amount of hops; we show its validation counterpart in Table~\ref{tab:locallayersval} (Appendix~\ref{app:additional}). In fact, many datasets peak at $k=2$, and either plateau or decrease in performance. These results also suggest that including later hops in any graph learning method may provoke overfitting, which can be an alternative explanation for the degradation of performance on deep GNNs apart from over-smoothing---as here it is not variable by construction. We also include an MLP with $k=0$, which performs worse than the other models.

\textbf{Best hyperparameters.}\quad
We include the best hyperparameters found for FAF$_4$ in Table~\ref{tab:hyperparams_per_dataset} (Appendix \ref{app:experimentalsetup}). All FAF variants benefit from normalization components, as aggregated features can vary widely in scale across reducers and hops. Compared with GNNs, FAFs typically favor larger learning rates, which can yield faster training, improved generalization via implicit regularization, and feature sparsity \citep{MohtashamiJS23,NEURIPS2024_6b737522}. Dropout levels, however, are broadly similar to those used for GNNs. This suggests that dropout’s gains on these node-classification tasks are driven more by dataset properties than by the specifics of training graph convolutions, which nuances prior interpretations \citep{luo2025beyond}.

\textbf{Ablation on single {reducers}.}\quad
Concatenating multiple aggregations has advantages and drawbacks. On the plus side, an MLP can learn to weight each reducer, removing the need to pick one per dataset. Because our individual reducers are not lossless, different datasets may favor different ones; moreover, adding informative, correlated covariates can improve robustness and reduce variance \citep{reddy2026when}. On the downside, concatenation increases input dimensionality, with corresponding memory and optimization costs. Table~\ref{tab:testmain} (Appendix~\ref{app:mainexpperformance}) reports test results when using a single aggregation at a time. {Note that this resembles a simple feature selection over FAF$_4$.} We keep the same hyperparameters as FAF$_4$ to isolate the effect of the aggregation choice, though the lower dimensionality could allow for alternative settings that further improve performance. Surprisingly, a single {reducer} often suffices, though the preferred choice varies by dataset. The mean is most frequently strongest, sometimes surpassing FAF$_4$ and FAF$_{\rm {mean},\rm {std}}$—i.e., on Amazon-Computer and Amazon-Photo. This may reflect optimization challenges from high-dimensional inputs or increased overfitting. Still, mean is not universally best, as sum and max win on some datasets (e.g., Citeseer favors sum; Amazon-Ratings favors max). Combining {reducers} therefore remains beneficial when one wishes to avoid committing to a specific one a priori.

\textbf{Comparison to one-layer classifier and last hop.}\quad
Some prior simplifications of GNNs \citep{sgcn,MICHELI2024127965} effectively fix the aggregation and train a single linear layer on the final-hop representation. In contrast, we concatenate representations from all hops and train a well-tuned MLP classifier. This choice is crucial for matching GNN performance. As shown in Table~\ref{tab:logreg} (Appendix~\ref{app:additional}), MLPs consistently outperform a single linear layer applied to the same concatenated features, indicating that their nonlinearity and increased capacity are important to learn from multi-hop features. Moreover, Table~\ref{tab:lasthop} (Appendix~\ref{app:additional}) shows that only using the last hop lacks important information that is not transmitted across aggregations.

\textbf{Kolmogorov-Arnold aggregation.}\quad
Our hypothesis that Roman-Empire lags due to information loss is reinforced by a FAF that uses the Kolmogorov–Arnold (KA) function $\Phi$, which is theoretically lossless (Theorem~\ref{thm:ka}). As we exemplified in Fig.~\ref{fig:cantor}, in practice KA is hard to use for classification. We observe in Table~\ref{tab:fafka} (Appendix~\ref{app:additional}) that in some datasets it struggles to fit during training, while others show mild overfitting. Nevertheless, in Roman-Empire this obtains a test accuracy of $80.33 \pm 0.47$, the highest among all FAFs, suggesting that providing full neighborhood information helps close the gap in this task. This, in turn, highlights the need for benchmarks where predictive signal genuinely arises at distant hops in complex ways.

\section{Conclusions}
We have introduced Fixed Aggregation Features (FAFs), a non-learnable tabular mapping from local neighborhoods of graph features to univariate representations that an MLP can learn to classify. Our analysis shows that fixed, injective neighborhood aggregation functions exist, linking multiset expressivity to Kolmogorov–Arnold factorizations; thus learned message passing is not required for expressivity in theory. But in practice, common non-injective {reducers} (mean, sum, max, min) train more reliably, underscoring a gap between what is expressive in principle and what is reliably learnable.   
We also highlight the practical advantages of a tabular view, such as access to the rich tabular toolkit of interpretability and tuning, and isolated representation from inference so we can attribute gains or failures to the features themselves rather than to message-passing optimization. 

On node classification datasets, FAFs are a strong baseline:\ They match or beat classic GNNs on many benchmarks, and trail only on two datasets needing longer-range interactions, where residualized GNNs help. 
Two ablations explain most gains:\ A well-tuned MLP beats a single linear classifier on top of FAF features, and concatenating all hop beats using only the last hop. This is consistent with later hops losing detail for these practical aggregat{ion schemes}. Surprisingly, two hops usually suffice, suggesting either limited signal in current benchmarks, or difficulty training deep GNNs to exploit more of it. While our theory carries over other downstream tasks, other benchmarks may surface different constraints that could alter the empirical outcomes.

Our findings have immediate implications. We recommend always including a tuned FAF baseline in future studies to calibrate what fixed aggregation alone can achieve; re-evaluating—and, when appropriate, retiring—datasets on which FAFs reach state-of-the-art performance; and developing benchmarks that genuinely require long-range dependencies and inter-hop dynamics. More broadly, we advocate for simplifying models and balancing expressiveness against optimizability, rather than assuming that extra parameters or higher expressiveness extract more relevant signal than simple baselines. Notably, several phenomena that are often blamed on graph architectures—overfitting, depth-related degradation, and sensitivity to dropout—also arise in tabular settings, indicating that some limitations may stem from dataset properties rather than the graph-aware architectures alone.

\section*{Acknowledgments and Disclosure of Funding}
The authors gratefully acknowledge the Gauss Centre for Supercomputing e.V. for funding this project by providing computing time on the GCS Supercomputer JUWELS at Jülich Supercomputing Centre (JSC). We also gratefully acknowledge funding from the European Research Council (ERC) under the Horizon Europe Framework Programme (HORIZON) for proposal number 101116395 SPARSE-ML.

\section*{Impact Statement}

This paper presents work whose goal is to advance the field of Machine
Learning. There are many potential societal consequences of our work, none
which we feel must be specifically highlighted here.

\bibliography{example_paper}
\bibliographystyle{icml2026}

\newpage
\appendix
\onecolumn

\section{Study of Reducers for Neighborhood Aggregation}

\subsection{Kolmogorov-Arnold Function $\Phi$ and its Continuous Inverse}\label{app:cantor}

\begin{figure*}[h!]
    \centering
  \begin{subfigure}{\linewidth}
    \centering
    \includegraphics[width=\linewidth]{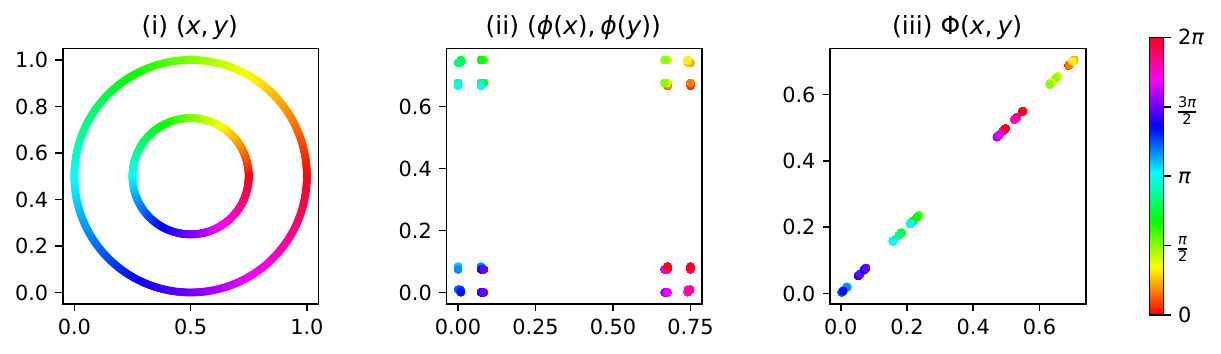}
    \subcaption{Aggregating from $(x,y)$ to $(\phi(x),\phi(y))$ to $\Phi(x,y)$.}\label{fig:cantor-a}
  \end{subfigure}
  
  \begin{subfigure}{\linewidth}
    \centering
    \includegraphics[width=1\linewidth]{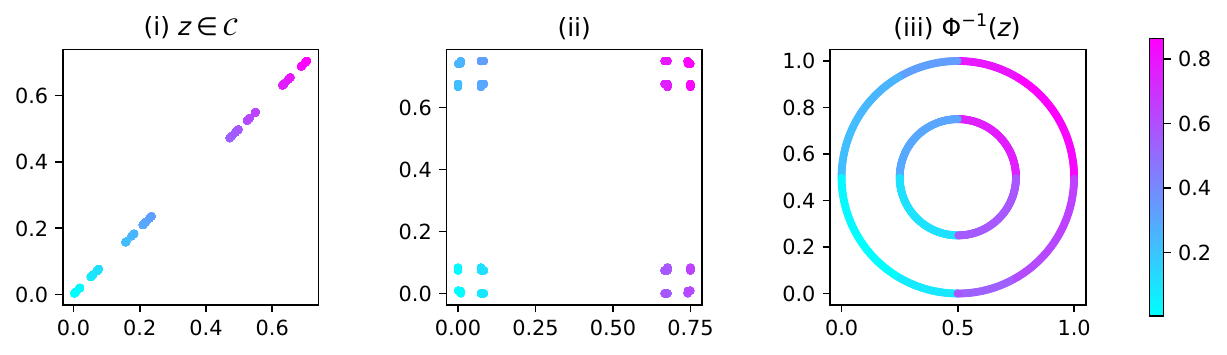}
    \subcaption{Recovering information from the aggregated variable $z\in\mathcal{C}$ to $\Phi^{-1}(z)$.}\label{fig:cantor-b}
  \end{subfigure}
  
  \begin{subfigure}{\linewidth}
    \centering
    \includegraphics[width=1\linewidth]{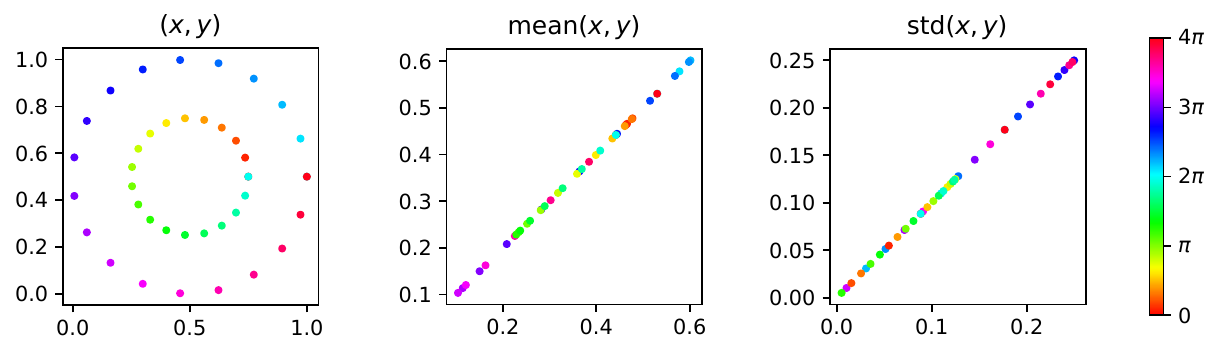}
    \subcaption{Mean and std { as aggregation functions}.}\label{fig:cantor-c}
  \end{subfigure}
    \caption{Functions $\Phi$ (Thm~\ref{thm:ka})\textemdash and its inverse\textemdash, mean and std. Circles and square-like panels (a.i, a.ii, b.ii, b.iii, c.i) live in the 2D space, while segments and Cantor sets (a.iii, b.i, c.ii, c.iii) live in 1D. Colors in (a) and (c) are based on angles on 2D, while colors in (b) are based on position.}
    \label{fig:cantor}
\end{figure*}

Here we showcase the behavior of $\Phi$ from the Kolmogorov-Arnold representation from Theorem~\ref{thm:ka}. In Fig.~\ref{fig:cantor-a}, we see the $\Phi$ image of two circles colored by their angle. Colors that were close together end up in separate parts of the Cantor set; for instance, oranges and reds, or purples and blues. In contrast, in Fig.~\ref{fig:cantor-b} we see $\Phi^{-1}$ maps the Cantor set to the circles in such a way that all colors maintain their closeness. 

If we use $\Phi$ as a fixed neighborhood aggregation, the classifier on top needs to learn to reverse it, therefore it is advantageous to have a continuous inverse. However, this does not give information about neighborhood distributions like the commonly used mean, sum, and max. In Fig.~\ref{fig:cantor-c} we show the behavior of mean and std; mean gives approximate location but fuses together points that are very far apart. For instance, blues and reds have an unusually large first (but different) coordinate and are mapped to the center; however, this information can be recovered with std.

The fact that the injective aggregator derived from theory is not practically useful is itself an insight. Our findings suggest that injectivity, and thus whether information is preserved by the aggregation, may not be the right lens through which to explain GNN performance. What seems to matter more is the model's ability to learn to discard less relevant information as depth increases. Preserving all neighborhood information across many hops may simply not be tractable or useful.

\subsection{Proof of Main Theorem}\label{app:one-hop}
For convenience, the following theorem restates Theorem~\ref{thm:onehop} of the main paper. 
Here, ``orthogonal features'' refers to the distinct feature states
that occur in the graph: if
\(\mathcal X=\{x_1,\ldots,x_n\}\subset \mathbb{R}^F\) denotes these
states, then \(x_i^\top x_j=0\) for \(i\neq j\). This does not require
\(x_u^\top x_v=0\) for every pair of distinct nodes \(u\neq v\).

\begin{restate}[1-hop aggregation] 
Assume the features $\mathcal{X}$ are orthogonal. 
Then, the function $h(X) = \sum_{x \in X} x$ {defined on} multisets $X \subseteq \mathcal{X}$ of bounded size is {injective}.
Moreover, any multiset function $f$ can be decomposed as $f(X) = g\!\left(\sum_{x \in X} x\right)$ for some function $g$.
\end{restate}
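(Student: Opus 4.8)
The plan is to reduce the claim to the counting argument behind the one-hot special case of \citet{xu2018how}, with the single new twist that orthogonality (rather than the standard-basis structure) still suffices to recover multiplicities from a sum.

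First I would record that any multiset $X\subseteq\mathcal{X}$ is determined by its count function $x\mapsto n_x(X)$, the multiplicity of $x$ in $X$, so that $h(X)=\sum_{x\in\mathcal{X}}n_x(X)\,x$. Since $\mathcal{X}\subseteq\mathbb{R}^F$ is pairwise orthogonal, at most $F$ of its elements are nonzero; as in the one-hot setting I would assume $0\notin\mathcal{X}$, so $\mathcal{X}$ is finite and $\|y\|^2>0$ for each $y\in\mathcal{X}$. Pairing $h(X)$ with a fixed $y\in\mathcal{X}$ and using $\langle x,y\rangle=0$ for $x\neq y$ gives $\langle h(X),y\rangle=n_y(X)\,\|y\|^2$, hence $n_y(X)=\langle h(X),y\rangle/\|y\|^2$. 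Thus the entire count function --- and therefore $X$ itself --- is a function of $h(X)$ alone, which is injectivity. Equivalently, $(n_x)_{x\in\mathcal{X}}\mapsto\sum_x n_x x$ is an injective linear map because the vectors in $\mathcal{X}$ are linearly independent, so injectivity in fact holds for multisets of any finite size; the ``bounded size'' hypothesis is inherited from \citet{xu2018how} and is harmless here (it merely keeps the multiset domain finite).

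For the decomposition I would let $\mathcal{D}$ denote the set of admissible multisets and $\mathcal{H}:=h(\mathcal{D})$ its image. By the injectivity just shown, $h:\mathcal{D}\to\mathcal{H}$ is a bijection with inverse $h^{-1}$, so I can define $g$ on $\mathbb{R}^F$ by $g(s):=f(h^{-1}(s))$ for $s\in\mathcal{H}$ and $g(s):=c$ for an arbitrary fixed $c$ in the codomain of $f$ otherwise; then $g(h(X))=f(h^{-1}(h(X)))=f(X)$ for all $X\in\mathcal{D}$, i.e.\ $f=g\circ h$. The explicit formula for $n_y$ above also delivers the sharper statement used in the main text: $g$ may be taken to first extract the counts $n_y=\langle\cdot,y\rangle/\|y\|^2$ from the sum and then apply $f$ to the resulting count vector.

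The one point that really needs care is the degenerate case $0\in\mathcal{X}$, where $h(X)=h(X\cup\{0\})$ and injectivity genuinely fails; the orthogonality hypothesis must therefore be read --- exactly as one-hot encodings implicitly are --- as excluding a zero feature vector. Beyond that, the argument is the routine orthogonal-projection computation, and no infinite-dimensional subtleties arise because orthogonality already forces $\mathcal{X}$ to be finite in $\mathbb{R}^F$.
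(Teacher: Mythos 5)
Your proof is correct and takes essentially the same route as the paper's: recover each multiplicity $n_y$ from the aggregate via the inner product $\langle h(X),y\rangle$ (the paper writes $x_f^T h(X)=n_{x_f}$, implicitly taking unit-norm features), conclude injectivity, and define $g$ by first extracting the counts and then applying $f$. Your extra care—normalizing by $\|y\|^2$ for non-unit orthogonal features, excluding the degenerate case $0\in\mathcal{X}$, and observing that linear independence makes the bounded-size hypothesis unnecessary—only tightens the same argument.
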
 

\begin{proof}
Note that the multiset $X$ is fully characterized by the number $n_f$ of nodes in the set that have a feature $x_f$ for all possible features $x_f$.
Our objective is to show that this information is contained in the aggregated form $h(X) = \sum_{x \in X} x$.

Let us assume that the features are orthogonal. 
Accordingly, the features $x_v$ of each node $v$ assume one of a finite number of possible states $x_1, \cdots x_n \in  \mathbb{R}^{n_f}$ with $n_f \geq n$ and $x^T_i x_j = 0$ for any pair $i, j \in V$ with $i\neq j$. 
Note that the number of possible feature states $n$ must be finite even in an infinitely large graph, as long as the number of features are finite, i.e. $n_f < \infty$.
Since the feature values must be pairwise orthogonal, there can maximally exist $n_f$ distinct feature vectors, as $n_f$ orthogonal vectors would form a basis of $ \mathbb{R}^{n_f}$ and therefore an additional vector would become linearly dependent on the basis vectors.

So let us consider any of the possible feature states $x_f$.
Then $x^T_f h(X) = \sum_{x \in X} x^T_f x = \sum_{x \in X, x = x_f} { 1 = n_{x_f}}$ counts the number of nodes in the set $S$ that have features $x_f$. 
Since this holds for all possible feature vectors $x_f$, all information about any multiset $X$ is preserved by $h(X)$.

{ Accordingly, we can write any multiset function $f(X) = (f(n_{x_1}), \cdots, f(n_{x_n}))$ (which transforms the feature counts) into a function $g$ that extracts first the count information from the sum $h(X)$. Concretely, we can define: $g(h(X))_{f} := f(n_{x_f}) = f(x^T_f h(X))$.}
\end{proof}

While the theorem shares a similarity with proofs of injectivity used in prior work to establish expressive power for learned aggregation, most notably GIN \citep{xu2018how}, our result is a strict generalization, as GIN is restricted to discrete features and one-hot encodings. Crucially, the injective aggregator from their proof is also task-independent and can therefore be treated as fixed, even if this is not considered for their architectural development. We instead question whether these injectivity arguments actually explain the empirical success of GNNs.

\subsection{Loss of Information Over Second Hops}\label{app:second-hops}
We now explore an example of a computational tree of a node with two rounds of sum aggregation, and the qualitative kind of information that is lost from the first to the second hop. As shown by \citet{xu2018how} and generalized in Theorem~\ref{thm:onehop}, sum is injective over one-hot encoded features, but the second aggregation round sums features that are not necessarily orthogonal, and therefore loses neighborhood information. The computational tree and calculation of hops are displayed in Figure~\ref{fig:tree-example}.

\begin{figure*}[h!]
    \centering
    \begin{tikzpicture}
    \node[inner sep=3pt,draw] at (0:0) (1) {1};
    \node[inner sep=2pt,draw,circle] at (40:1) (2) {2};
    \node[inner sep=3pt,draw] at (40:2) (4) {4};
    \node[inner sep=2pt,draw,circle] at (20:2) (5) {5};

    \node[inner sep=2pt,draw,circle] at (-45:1) (3) {3};
    \node[inner sep=3pt,draw] at (-10:2) (6) {6};
    \node[inner sep=3pt,draw] at (-30:2) (7) {7};
    \node[inner sep=2pt,draw,circle] at (-50:2) (8) {8};

    \draw (1) -- (2) -- (4);
    \draw (2) -- (5);
    
    \draw (1) -- (3) -- (6);
    \draw (3) -- (7);
    \draw (3) -- (8);

    \begin{scope}[xshift=7cm]
    \node at (0,0) {\begin{tabular}{l}
       Feature encodings:
       $\square = {\binom{1}{0}} $  and $\bigcirc = {\binom{0}{1}} $  \\\\
       $H_1^{(0)}={\binom{1}{0}}$  \\[4pt]
       $H_2^{(0)}=H_3^{(0)}={\binom{0}{1}} \implies H_1^{(1)} = {\binom{0}{2}}$\\[4pt]
       $H_2^{(1)}={\binom{2}{1}}$ and $H_3^{(1)}={\binom{3}{1}} \implies H_1^{(2)} = {\binom{5}{2}}$
    \end{tabular}};
    \end{scope}
    \end{tikzpicture}
    \caption{Example of a two-hop neighborhood with one-hot encoded features and sum aggregation.}
    \label{fig:tree-example}
\end{figure*}
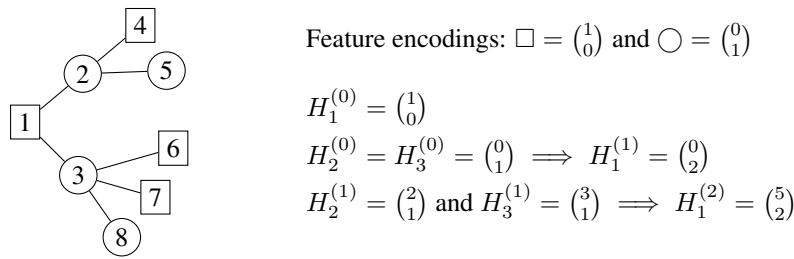

Given the previous hops $H_1^{(0)}={\binom{1}{0}}$ and $H_1^{(1)}={\binom{0}{2}}$, and the second hop $H_1^{(2)}={\binom{5}{2}}$, what other combinations of two-hop neighborhoods can there be for node 1? Apart from itself, node 1's two-hop neighbors are in a ${\binom{3}{2}}$ feature ratio. However, we have lost the ability to recognize a) how many belong to each of its one-hop neighbor; and b) the distribution or homogeneity of each neighborhood. In reality, these 5 nodes are approximately spread out in number and distribution across the one-hop neighbors 2 and 3. But alternatively, all 5 nodes could have belonged to node 2, or all squared nodes could have belonged to node 3.

Note that, without previous hops $H_1^{(0)}$ and $H_1^{(1)}$, we cannot even distinguish node 1's original features, nor distinguish its presence as a neighbor in its one-hop neighbors. Therefore, concatenating all hops is advantageous.

For completeness, we also include the calculations for mean aggregation, which are qualitatively similar to the sum in this case. 

\begin{tabular}{ll}
       $H_1^{(0)}={\binom{1}{0}}$  ;  \quad
       $H_2^{(0)}=H_3^{(0)}={\binom{0}{1}}$ & $\implies H_1^{(1,m)} = {\binom{0}{1}}$ \\[4pt]
       $H_2^{(1,m)}={\binom{2/3}{1/3}}$ and $H_3^{(1.m)}={\binom{3/4}{1/4}}$ & $\implies H_1^{(2,m)} = {\binom{17/24}{7/24}}$
\end{tabular}

A good lossless aggregation scheme should take all possible second-hop neighborhood distributions and map them to values that would not lose information when aggregated. For instance, choosing $a,b$ such that ${\binom{2}{1}}\rightarrow a$ from node 2, and ${\binom{3}{1}}\rightarrow b$ from node 3, so that, $H_1^{(2)}=a+b$ could recover both values separately. Naturally, mapping them to one-hot encodings per distribution would suffice, but it would grow exponentially. This opens the door for better suitable fixed aggregations, or perhaps other kinds of learnable aggregation beyond current understanding of message passing.


\section{Experimental Details}
The codebase can be found here: \url{https://github.com/celrm/fixed-aggregation-features}.

\label{app:experimentalsetup}

\subsection{Dataset Details}
Datasets are taken directly from the setup of \citet{luo2024classic}, which includes varied node classification datasets. Here in Table~\ref{tab:datasets} we include for completeness the same overview of these benchmarks.

\begin{table*}[h!]
\centering
\caption{Details of the node classification datasets.}
\label{tab:datasets}
\resizebox{\linewidth}{!}{
\begin{tabular}{l l r r r r l l}
\toprule
{Dataset} & {Type} & {\# Nodes} & {\# Edges} & {\# Features} & {Classes} & {Metric} & {Origin} \\
\midrule
Cora            & Homophily   & 2,708   & 5,278   & 1,433  & 7  & Accuracy & \citep{Cora} \\
CiteSeer        & Homophily   & 3,327   & 4,522   & 3,703  & 6  & Accuracy & \citep{Citeseer} \\
PubMed          & Homophily   & 19,717  & 44,324  & 500    & 3  & Accuracy & \citep{Pubmed} \\
Computer        & Homophily   & 13,752  & 245,861 & 767    & 10 & Accuracy & \citep{shchur2018pitfalls} \\
Photo           & Homophily   & 7,650   & 119,081 & 745    & 8  & Accuracy & \citep{shchur2018pitfalls} \\
CS              & Homophily   & 18,333  & 81,894  & 6,805  & 15 & Accuracy & \citep{shchur2018pitfalls} \\
Physics         & Homophily   & 34,493  & 247,962 & 8,415  & 5  & Accuracy & \citep{shchur2018pitfalls} \\
WikiCS          & Homophily   & 11,701  & 216,123 & 300    & 10 & Accuracy & \citep{wikics} \\
Squirrel        & Heterophily & 2,223   & 46,998  & 2,089  & 5  & Accuracy & \citep{heterophilous} \\
Chameleon       & Heterophily & 890     & 8,854   & 2,325  & 5  & Accuracy & \citep{heterophilous} \\
Roman-Empire    & Heterophily & 22,662  & 32,927  & 300    & 18 & Accuracy & \citep{platonov2023critical} \\
Amazon-Ratings  & Heterophily & 24,492  & 93,050  & 300    & 5  & Accuracy & \citep{platonov2023critical} \\
Minesweeper     & Heterophily & 10,000  & 39,402  & 7      & 2  & ROC-AUC  & \citep{platonov2023critical} \\
Questions       & Heterophily & 48,921  & 153,540 & 301    & 2  & ROC-AUC  & \citep{platonov2023critical} \\
\bottomrule
\end{tabular}
}
\end{table*}

\subsection{Hyperparameters}
Each experiment is run on an NVIDIA A100 GPU. The setup is taken from \citet{luo2024classic}. That is, for a maximum of 2500 epochs, we tune the following parameters:
\begin{enumerate}
    \item DROPOUT $\in$ (0.0 0.2 0.3 0.5 0.7)
    \item LR $\in$ (0.01 0.005 0.001 0.0001)
    \item NORMALIZATION $\in$ (ln bn none)
    \item HIDDEN CHANNELS $\in$ (64 256 512)
\end{enumerate}

While \citet{luo2024classic} includes weight decay as a hyperparameter, there are no concrete ranges specified for it. Therefore, we tune all the different values from the best runs of the given datasets: 
\begin{enumerate}
\item[5. ] WEIGHT DECAY $\in$ (0.0 1e-2 1e-3 5e-4 5e-5). 
\end{enumerate}

Moreover, \citet{luo2024classic} tunes the local layers from 1 to 10 or 15. We instead take for each dataset the same value that they have found best for the GNNs, and use it to construct our fixed aggregation features up to that depth. In some cases where there are too many features, we restrict the depth to a smaller value, thus including a strict subset of features instead. This serves as an ad hoc feature selection to reduce overfitting.

We include as hyperparameter the MLP depth. We also include results for MLP = 1 in Table~\ref{tab:logreg}.
\begin{enumerate}
\item[6. ] 
MLP LAYERS $\in$ (2 3 5). 
\end{enumerate}

On the other hand, we do not include linear residual connections, as these are used to bypass the convolutional layers in the classical GNNs. This creates a direct difference on the two datasets that most benefit from this component, Minesweeper and Roman-Empire.

In Table~\ref{tab:hyperparams_per_dataset} we include the best hyperparameter choices for the four models: GCN, GAT, GraphSAGE, and FAF$_4$, the results of which are in Table~\ref{tab:valmain}. We run baselines directly from the setup of \citet{luo2024classic}, and we sweep FAF$_4$ with the same ranges included in their work. Each dataset has a specific number of splits given by their setup (from 3 to 10), which we then average.

\subsection{Runtime and Feature Reduction}
We also include in Table~\ref{tab:runtimes} the training runtime of our algorithms, grouped by the number of reducers: FAF$_i$ with $i=|\mathcal{R}|$\textemdash as MLPs with the same input width will have the same training time. Note that all runs have the same number of epochs (2500), as in the original setup, and all datasets match the number of runs of the setup.

\begin{table*}[h!]
  \centering
  \small
\caption{Empirical training time in seconds of FAF and GNN models, averaged over runs.}
\label{tab:runtimes}
\centering\resizebox{\linewidth}{!}{
  \begin{tabular}{ccccccccccc}
    \toprule
    Dataset
      & {GCN}
      & {GAT}
      & {SAGE}
      & {FAF$_4$}
      & {FAF$_2$}
      & {FAF$_1$}
      & {MLP} \\
    \midrule
    computer
      & 127.30 ± 0.58
      & 29.33 ± 3.22
      & 45.33 ± 0.58
      & 42.67 ± 0.58
      & 25.00 ± 0.00
      & 17.00 ± 0.00
      & 9.33 ± 0.58 \\
    photo
      & 82.00 ± 0.00
      & 26.00 ± 0.00
      & 33.33 ± 0.58
      & 66.67 ± 0.58
      & 36.00 ± 0.00
      & 21.33 ± 0.58
      & 7.33 ± 0.58 \\
    ratings
      & 140.30 ± 0.58
      & 161.30 ± 0.58
      & 330.00 ± 0.00
      & 46.00 ± 0.00
      & 26.00 ± 0.00
      & 17.33 ± 0.58
      & 10.00 ± 0.00 \\
    chameleon
      & 22.30 ± 0.48
      & 15.20 ± 0.42
      & 18.00 ± 0.00
      & 44.50 ± 0.97
      & 22.50 ± 0.53
      & 15.10 ± 0.32
      & 10.00 ± 0.00 \\
    citeseer
      & 16.20 ± 0.45
      & 20.00 ± 0.00
      & 26.00 ± 0.00
      & 62.80 ± 1.79
      & 36.20 ± 0.45
      & 25.80 ± 1.79
      & 13.60 ± 0.55 \\
    coauthor-cs
      & 157.00 ± 0.00
      & 70.33 ± 0.58
      & 301.30 ± 20.50
      & 296.30 ± 1.16
      & 159.00 ± 0.00
      & 91.00 ± 0.00
      & 26.33 ± 0.58 \\
    coauthor-physics
      & 65.33 ± 0.58
      & 190.00 ± 0.00
      & 400.30 ± 0.58
      & 2383.00 ± 0.58
      & 1004.00 ± 1.00
      & 532.70 ± 0.58
      & 183.00 ± 0.00 \\
    cora
      & 16.40 ± 0.89
      & 20.20 ± 0.45
      & 11.20 ± 0.45
      & 46.80 ± 0.45
      & 26.00 ± 0.00
      & 17.00 ± 0.00
      & 8.20 ± 0.45 \\
    minesweeper
      & 69.67 ± 0.58
      & 100.30 ± 0.58
      & 68.67 ± 2.08
      & 19.00 ± 0.00
      & 20.00 ± 3.46
      & 19.00 ± 0.00
      & 18.00 ± 1.73 \\
    pubmed
      & 20.20 ± 0.45
      & 42.00 ± 0.00
      & 78.20 ± 0.45
      & 34.00 ± 0.00
      & 19.20 ± 0.45
      & 12.00 ± 0.00
      & 6.20 ± 0.45 \\
    questions
      & 650.00 ± 0.00
      & 258.00 ± 1.00
      & 363.70 ± 0.58
      & 180.70 ± 0.58
      & 122.70 ± 0.58
      & 94.00 ± 0.00
      & 64.67 ± 1.16 \\
    roman-empire
      & 240.30 ± 0.58
      & 294.30 ± 0.58
      & 93.00 ± 0.00
      & 38.67 ± 2.89
      & 23.33 ± 0.58
      & 18.00 ± 0.00
      & 14.67 ± 2.89 \\
    squirrel
      & 29.00 ± 0.00
      & 87.10 ± 0.32
      & 24.70 ± 1.89
      & 43.30 ± 0.48
      & 25.20 ± 0.42
      & 18.00 ± 0.00
      & 11.70 ± 0.48 \\
    wikics
      & 60.00 ± 3.46
      & 97.33 ± 0.58
      & 27.33 ± 0.58
      & 10.00 ± 0.00
      & 7.33 ± 0.58
      & 7.00 ± 0.00
      & 7.00 ± 0.00 \\
    \bottomrule
  \end{tabular}
}
\end{table*}

In general, MLPs are more efficient than MPGNNs, as backpropagation over message-passing is costly. However, we increase the number of features in the data\textemdash depending on the aggregation depth and number of reducers\textemdash so for some datasets with many features the improvement is not necessarily observed. Moreover, this can lead to memory constraints. A way to reduce this overhead is to apply dimensionality reduction to the tabular FAF representation. 

Applying PCA before aggregation discards information from the original node features, which is undesirable. Post-aggregation reduction is more natural because many aggregated columns can be redundant. That said, PCA mixes raw features into dense linear combinations, which can make optimization harder when the task signal is sparse, e.g., concentrated in interactions between a few specific features. As an alternative, in Table~\ref{tab:dropcols} we drop columns that lie in the span of others while leaving the original features untouched. This preserves performance, and the runtime gain is approximately proportional to the number of dropped features.

\begin{table*}[h!]
  \centering
  \small
\caption{Comparison of FAF$_4$ against post-dropping columns in the same span than previous ones, and to PCA.}
\label{tab:dropcols}
\centering\resizebox{\linewidth}{!}{
\begin{tabular}{l*{15}{c}}
\toprule
Dataset
& \multicolumn{3}{c}{Cora}
& \multicolumn{3}{c}{Citeseer}
& \multicolumn{3}{c}{Pubmed}
& \multicolumn{3}{c}{Coauthor-CS}
& \multicolumn{3}{c}{Amazon-Photo} \\
\cmidrule(lr){2-4}
\cmidrule(lr){5-7}
\cmidrule(lr){8-10}
\cmidrule(lr){11-13}
\cmidrule(lr){14-16}
Model
& FAF$_4$ & Drop-Span & PCA
& FAF$_4$ & Drop-Span & PCA
& FAF$_4$ & Drop-Span & PCA
& FAF$_4$ & Drop-Span & PCA
& FAF$_4$ & Drop-Span & PCA \\
\midrule
Test perf.
& 0.818 & 0.8226 & 0.6632
& 0.6768 & 0.6846 & 0.471
& 0.772 & 0.7688 & 0.627
& 0.9541 & 0.95 & 0.9203
& 0.9651 & 0.9612 & 0.9388 \\
OG. dim.
& 1433 & 1433 & 1433
& 3703 & 3703 & 3703
& 500 & 500 & 500
& 6805 & 6805 & 6805
& 745 & 745 & 745 \\
MLP dim.
& 18629 & 2708 & 2708
& 33327 & 3327 & 3327
& 8500 & 7364 & 8500
& 88465 & 18333 & 18333
& 18625 & 7650 & 7650 \\
\bottomrule
\end{tabular}
}
\end{table*}

\subsection{FAFs Beyond Transductive Node Classification}

Our theory applies to any task that learns multiset functions over neighborhoods. In our experiments, we focus on node classification for two main reasons. First, these are the benchmarks on which GNNs have been shown to be competitive with more complex architectures in \citet{luo2024classic}, so they are amenable to simple models for which we have strong, well-tuned hyperparameters. Second, node-classification datasets typically provide rich features that depend on neighborhood distributions. Thus, non-injective but commonly used reducers such as mean and sum still convey highly informative distributional signals. Regarding inductive settings, they would require computing the new aggregation rounds at test time. We would not have access to the test node features when precomputing training aggregations. Otherwise, our approach is just as feasible as in the transductive case.

\begin{table*}[h!]
\caption{Best hyperparameters for FAF. Classic GNNs are taken from \citep{luo2024classic}.}
\label{tab:hyperparams_per_dataset}
\centering
\small
\begin{tabular}{llccccccccc}
\toprule
Dataset & Model & dropout & lr & bn & ln & hidden channels & wd & hops & mlp layers & res \\
\midrule
\multirow[t]{4}{*}{computer} & GCN & 0.5 & 0.001 & 0 & 1 & 512 & 5e-05 & 3 & 0 & 0 \\
 & GAT & 0.5 & 0.001 & 0 & 1 & 64 & 5e-05 & 2 & 0 & 0 \\
 & SAGE & 0.3 & 0.001 & 0 & 1 & 64 & 5e-05 & 4 & 0 & 0 \\
 & FAF & 0.7 & 0.005 & 1 & 0 & 256 & 5e-05 & 2 & 2 & 0 \\
\cline{1-11}
\multirow[t]{4}{*}{photo} & GCN & 0.5 & 0.001 & 0 & 1 & 256 & 5e-05 & 6 & 0 & 1 \\
 & GAT & 0.5 & 0.001 & 0 & 1 & 64 & 5e-05 & 3 & 0 & 1 \\
 & SAGE & 0.2 & 0.001 & 0 & 1 & 64 & 5e-05 & 6 & 0 & 1 \\
 & FAF & 0.5 & 0.005 & 1 & 0 & 256 & 0.0005 & 4 & 2 & 0 \\
\cline{1-11}
\multirow[t]{4}{*}{ratings} & GCN & 0.5 & 0.001 & 1 & 0 & 512 & 0 & 4 & 0 & 1 \\
 & GAT & 0.5 & 0.001 & 1 & 0 & 512 & 0 & 4 & 0 & 1 \\
 & SAGE & 0.5 & 0.001 & 1 & 0 & 512 & 0 & 9 & 0 & 1 \\
 & FAF & 0.2 & 0.001 & 1 & 0 & 256 & 0 & 3 & 2 & 0 \\
\cline{1-11}
\multirow[t]{4}{*}{chameleon} & GCN & 0.2 & 0.005 & 0 & 0 & 512 & 0.001 & 5 & 0 & 0 \\
 & GAT & 0.7 & 0.01 & 1 & 0 & 256 & 0.001 & 2 & 0 & 1 \\
 & SAGE & 0.7 & 0.01 & 1 & 0 & 256 & 0.001 & 4 & 0 & 1 \\
 & FAF & 0.3 & 0.001 & 1 & 0 & 512 & 0.01 & 5 & 5 & 0 \\
\cline{1-11}
\multirow[t]{4}{*}{citeseer} & GCN & 0.5 & 0.001 & 0 & 0 & 512 & 0.01 & 2 & 0 & 0 \\
 & GAT & 0.5 & 0.001 & 0 & 0 & 256 & 0.01 & 3 & 0 & 1 \\
 & SAGE & 0.2 & 0.001 & 0 & 0 & 512 & 0.01 & 3 & 0 & 0 \\
 & FAF & 0 & 0.005 & 0 & 1 & 512 & 0.001 & 2 & 3 & 0 \\
\cline{1-11}
\multirow[t]{4}{*}{coauthor-cs} & GCN & 0.3 & 0.001 & 0 & 1 & 512 & 0.0005 & 2 & 0 & 1 \\
 & GAT & 0.3 & 0.001 & 0 & 1 & 256 & 0.0005 & 1 & 0 & 1 \\
 & SAGE & 0.5 & 0.001 & 0 & 1 & 512 & 0.0005 & 2 & 0 & 1 \\
 & FAF & 0.2 & 0.005 & 1 & 0 & 64 & 0.01 & 2 & 2 & 0 \\
\cline{1-11}
\multirow[t]{3}{*}{coauthor-physics} & GCN & 0.3 & 0.001 & 0 & 1 & 64 & 0.0005 & 2 & 0 & 1 \\
 & GAT & 0.7 & 0.001 & 1 & 0 & 256 & 0.0005 & 2 & 0 & 1 \\
 & SAGE & 0.7 & 0.001 & 1 & 0 & 64 & 0.0005 & 2 & 0 & 1 \\
   & FAF & 0 & 0.001 & 1 & 0 & 512 & 0.001 & 1 & 2 & 0 \\

\cline{1-11}
\multirow[t]{4}{*}{cora} & GCN & 0.7 & 0.001 & 0 & 0 & 512 & 0.0005 & 3 & 0 & 0 \\
 & GAT & 0.2 & 0.001 & 0 & 0 & 512 & 0.0005 & 3 & 0 & 1 \\
 & SAGE & 0.7 & 0.001 & 0 & 0 & 256 & 0.0005 & 3 & 0 & 0 \\
 & FAF & 0.7 & 0.01 & 0 & 1 & 512 & 0.01 & 3 & 3 & 0 \\
\cline{1-11}
\multirow[t]{4}{*}{minesweeper} & GCN & 0.2 & 0.01 & 1 & 0 & 64 & 0 & 12 & 0 & 1 \\
 & GAT & 0.2 & 0.01 & 1 & 0 & 64 & 0 & 15 & 0 & 1 \\
 & SAGE & 0.2 & 0.01 & 1 & 0 & 64 & 0 & 15 & 0 & 1 \\
 & FAF & 0.2 & 0.01 & 1 & 0 & 64 & 0 & 4 & 12 & 0 \\
\cline{1-11}
\multirow[t]{4}{*}{pubmed} & GCN & 0.7 & 0.005 & 0 & 0 & 256 & 0.0005 & 2 & 0 & 0 \\
 & GAT & 0.5 & 0.01 & 0 & 0 & 512 & 0.0005 & 2 & 0 & 0 \\
 & SAGE & 0.7 & 0.005 & 0 & 0 & 512 & 0.0005 & 4 & 0 & 0 \\
 & FAF & 0.7 & 0.01 & 0 & 1 & 64 & 0 & 4 & 2 & 0 \\
\cline{1-11}
\multirow[t]{4}{*}{questions} & GCN & 0.3 & 3e-05 & 0 & 0 & 512 & 0 & 10 & 0 & 1 \\
 & GAT & 0.2 & 3e-05 & 0 & 1 & 512 & 0 & 3 & 0 & 1 \\
 & SAGE & 0.2 & 3e-05 & 0 & 1 & 512 & 0 & 6 & 0 & 0 \\
 & FAF & 0.2 & 0.005 & 1 & 0 & 512 & 0.01 & 4 & 3 & 0 \\
\cline{1-11}
\multirow[t]{4}{*}{roman-empire} & GCN & 0.5 & 0.001 & 1 & 0 & 512 & 0 & 9 & 0 & 1 \\
 & GAT & 0.3 & 0.001 & 1 & 0 & 512 & 0 & 10 & 0 & 1 \\
 & SAGE & 0.3 & 0.001 & 1 & 0 & 256 & 0 & 9 & 0 & 0 \\
 & FAF & 0.7 & 0.01 & 1 & 0 & 256 & 0 & 2 & 3 & 0 \\
\cline{1-11}
\multirow[t]{4}{*}{squirrel} & GCN & 0.7 & 0.01 & 1 & 0 & 256 & 0.0005 & 4 & 0 & 1 \\
 & GAT & 0.5 & 0.005 & 1 & 0 & 512 & 0.0005 & 7 & 0 & 1 \\
 & SAGE & 0.7 & 0.01 & 1 & 0 & 256 & 0.0005 & 3 & 0 & 1 \\
 & FAF & 0.7 & 0.01 & 1 & 0 & 512 & 0.01 & 4 & 5 & 0 \\
\cline{1-11}
\multirow[t]{4}{*}{wikics} & GCN & 0.5 & 0.001 & 0 & 1 & 256 & 0 & 3 & 0 & 0 \\
 & GAT & 0.7 & 0.001 & 0 & 1 & 512 & 0 & 2 & 0 & 1 \\
 & SAGE & 0.7 & 0.001 & 0 & 1 & 256 & 0 & 2 & 0 & 0 \\
 & FAF & 0.7 & 0.01 & 1 & 0 & 64 & 0.001 & 2 & 2 & 0 \\
\cline{1-11}
\bottomrule
\end{tabular}
\end{table*}

\newpage

\section{Performance Details For the Main Experiments} \label{app:mainexpperformance}

\subsection{Validation and Test Accuracy of Main Results}\label{app:maintables}

In Tables \ref{tab:valmain},~\ref{tab:testmain} we report the validation and test accuracy of the main FAF variants of our experimental results (\S~\ref{s:experiments}) where in Table~\ref{tab:testmain2} we only have the best validation FAF's test results. FAFs in all datasets are $\pm 1\%$ away from the best classic GNN, except for those already mentioned in the main text (Citeseer, Cora, Roman-Empire, and Minesweeper).

\begin{table*}[h!]
\caption{Validation accuracy on node classification: all FAFs against classic GNNs. Test accuracy is shown in Table~\ref{tab:testmain}.}
\label{tab:valmain}
\centering\resizebox{\linewidth}{!}{%
\begin{tabular}{lccccccc}
\toprule
Dataset & computer & photo & ratings & chameleon & citeseer & coauthor-cs & coauthor-physics \\
\midrule
GCN & 92.58 ± 0.10 & 95.42 ± 0.11 & 54.01 ± 0.23 & 48.15 ± 2.35 & \textbf{70.36 ± 0.09} & \underline{95.32 ± 0.07} & \textbf{97.16 ± 0.07} \\
GAT & 92.86 ± 0.06 & 95.93 ± 0.15 & 55.56 ± 0.68 & 46.97 ± 2.07 & \underline{69.52 ± 0.27} & 95.30 ± 0.08 & \underline{97.11 ± 0.03} \\
SAGE & 92.33 ± 0.17 & 95.60 ± 0.16 & \textbf{55.90 ± 0.54} & 46.22 ± 2.12 & 68.48 ± 1.05 & \textbf{95.51 ± 0.04} & 97.02 ± 0.09 \\ 
MLP & 87.89 ± 0.13 & 93.33 ± 0.07 & 48.98 ± 0.72 & 41.43 ± 1.77 & 56.80 ± 1.24 & 93.70 ± 0.07 & 95.89 ± 0.02 \\
\midrule
FAF$_4$ & \underline{93.05 ± 0.04} & \textbf{96.34 ± 0.07} & 55.53 ± 0.43 & \textbf{48.51 ± 2.31} & 67.28 ± 0.64 & 94.93 ± 0.07 & 96.83 ± 0.01 \\
FAF$_{\rm {mean},\rm {std}}$ & 93.04 ± 0.13 & \underline{96.23 ± 0.08} & 55.11 ± 0.40 & \underline{48.42 ± 1.64} & 67.20 ± 0.28 & 94.94 ± 0.07 & 96.84 ± 0.03 \\
FAF$_{\rm {mean}}$ & \textbf{93.16 ± 0.04} & 96.06 ± 0.10 & 53.78 ± 0.52 & 47.99 ± 2.02 & 66.92 ± 0.87 & 95.20 ± 0.14 & 97.00 ± 0.04 \\
FAF$_{\rm max,std}$ & 92.32 ± 0.08 & 95.80 ± 0.04 & \underline{55.70 ± 0.45} & 48.42 ± 2.14 & 66.64 ± 0.54 & 95.04 ± 0.04 & 96.56 ± 0.03 \\
FAF$_{\rm max}$ & 91.93 ± 0.04 & 95.60 ± 0.04 & {55.63 ± 0.29} & 48.06 ± 2.30 & 66.56 ± 0.50 & 95.19 ± 0.13 & 96.54 ± 0.01 \\
FAF$_{\rm {sum}}$ & 90.95 ± 0.04 & 94.88 ± 0.04 & 53.48 ± 0.59 & 47.29 ± 1.92 & 67.84 ± 1.45 & 95.13 ± 0.09 & 96.65 ± 0.05 \\
FAF$_{\rm {std}}$ & 92.50 ± 0.06 & 95.86 ± 0.10 & 55.31 ± 0.32 & 47.27 ± 2.15 & 63.44 ± 0.17 & 95.01 ± 0.12 & 96.76 ± 0.02 \\
\bottomrule
\end{tabular}%
}
\medskip 

\centering\resizebox{\linewidth}{!}{%
\begin{tabular}{lccccccc}
\toprule
Dataset & cora & minesweeper & pubmed & questions & roman-empire & squirrel & wikics \\
\midrule
GCN & 81.28 ± 0.33 & \underline{97.36 ± 0.46} & 79.08 ± 0.23 & 78.63 ± 0.23 & \textbf{91.14 ± 0.58} & 44.88 ± 1.27 & 81.52 ± 0.37 \\
GAT & 81.16 ± 0.52 & 97.08 ± 1.16 & 78.84 ± 0.52 & 78.12 ± 1.03 & \underline{90.49 ± 0.68} & 43.30 ± 1.43 & \textbf{82.38 ± 0.57} \\
SAGE & 81.32 ± 0.41 & \textbf{97.68 ± 0.63} & 78.88 ± 0.91 & 77.35 ± 1.09 & 90.44 ± 0.66 & 40.58 ± 1.17 & \underline{82.27 ± 0.38} \\
MLP & 62.68 ± 1.15 & 51.12 ± 0.93 & 71.12 ± 0.52 & 71.58 ± 1.46 & 66.28 ± 0.27 & 40.57 ± 0.92 & 74.86 ± 0.33 \\
\midrule
FAF$_4$ & 82.84 ± 0.43 & 89.63 ± 1.03 & 79.08 ± 0.36 & \textbf{79.53 ± 1.12} & 78.68 ± 0.19 & \underline{47.31 ± 1.39} & 81.92 ± 0.43 \\
FAF$_{\rm {mean},\rm {std}}$ & \textbf{83.36 ± 0.17} & 89.18 ± 0.71 & \textbf{81.28 ± 0.30} & 77.32 ± 0.36 & 77.59 ± 0.41 & {47.30 ± 1.32} & 81.37 ± 0.51 \\
FAF$_{\rm {mean}}$ & \underline{83.28 ± 0.30} & 89.89 ± 0.93 & \underline{81.16 ± 0.97} & 78.53 ± 0.87 & 76.67 ± 0.36 & 46.29 ± 1.50 & 81.58 ± 0.46 \\
FAF$_{\rm max}$ & 81.80 ± 0.42 & 86.08 ± 0.77 & 77.48 ± 0.30 & \underline{79.15 ± 0.86} & 75.06 ± 0.14 & 46.47 ± 1.38 & 80.30 ± 0.56 \\
FAF$_{\rm max,std}$ & 82.08 ± 0.33 & 87.83 ± 0.63 & 78.28 ± 0.30 & 78.86 ± 0.89 & 76.19 ± 0.26 & \textbf{47.44 ± 1.51} & 80.46 ± 0.53 \\
FAF$_{\rm {sum}}$ & 82.60 ± 0.65 & 89.86 ± 0.85 & 79.40 ± 0.57 & 78.12 ± 0.27 & 77.13 ± 0.23 & 46.85 ± 1.28 & 78.17 ± 0.23 \\
FAF$_{\rm {std}}$ & 81.40 ± 0.51 & 88.20 ± 0.52 & 80.00 ± 0.40 & 76.25 ± 0.53 & 73.95 ± 0.49 & 45.91 ± 1.32 & 77.65 ± 0.31 \\
\bottomrule
\end{tabular}%
}
\end{table*}

\begin{table}
\caption{Test accuracy on node classification: all FAFs against classic GNNs. Validation accuracy is shown in Table~\ref{tab:valmain}.}
\label{tab:testmain}
\centering\resizebox{\linewidth}{!}{
\begin{tabular}{lccccccc}
\toprule
Dataset & computer & photo & ratings & chameleon & citeseer & coauthor-cs & coauthor-physics \\
\midrule
GCN & 93.58 ± 0.44 & 95.77 ± 0.27 & 53.86 ± 0.48 & \underline{44.62 ± 4.50} & \textbf{72.72 ± 0.45} & 95.73 ± 0.15 & \textbf{97.47 ± 0.08} \\
GAT & 93.91 ± 0.22 & 96.45 ± 0.37 & \textbf{55.51 ± 0.55} & 42.90 ± 5.47 & \underline{71.82 ± 0.65} & \underline{96.14 ± 0.08} & \underline{97.12 ± 0.13} \\
SAGE & 93.31 ± 0.17 & 96.17 ± 0.44 & \underline{55.26 ± 0.27} & 43.11 ± 4.73 & \underline{71.82 ± 0.81} & \textbf{96.21 ± 0.10} & 97.10 ± 0.09 \\
MLP & 87.75 ± 0.42 & 93.62 ± 0.36 & 49.04 ± 0.39 & 38.59 ± 3.29 & 57.22 ± 2.25 & 93.80 ± 0.19 & 96.02 ± 0.16 \\
\midrule
FAF$_{\rm best val}$ & \textbf{94.01 ± 0.21} & \underline{96.54 ± 0.13} & 55.09 ± 0.24 & 42.96 ± 2.45 & 70.48 ± 1.24 & 95.37 ± 0.17 & 97.05 ± 0.18 \\
\midrule

FAF$_4$ & 93.75 ± 0.04 & \underline{96.54 ± 0.13} & 54.42 ± 0.45 & 42.96 ± 2.45 & 69.42 ± 1.32 & 95.33 ± 0.20 & 96.96 ± 0.09 \\
FAF$_{\rm mean,std}$ & \underline{94.00 ± 0.25} & 96.30 ± 0.23 & 54.73 ± 0.22 & \textbf{45.13 ± 3.42} & 67.90 ± 0.95 & 95.34 ± 0.14 & 96.93 ± 0.04 \\
FAF$_{\rm mean}$ & \textbf{94.01 ± 0.21} & \textbf{96.71 ± 0.16} & 53.12 ± 0.44 & 43.21 ± 2.24 & 66.82 ± 1.74 & 95.37 ± 0.17 & 97.05 ± 0.18 \\
FAF$_{\rm max,std}$ & 93.60 ± 0.25 & 96.01 ± 0.41 & 55.09 ± 0.24 & 43.20 ± 2.42 & 67.18 ± 0.88 & 95.53 ± 0.10 & 96.61 ± 0.04 \\
FAF$_{\rm max}$ & 92.98 ± 0.22 & 96.12 ± 0.10 & 54.79 ± 0.15 & 42.15 ± 3.19 & 67.52 ± 0.40 & 95.55 ± 0.08 & 96.84 ± 0.13 \\
FAF$_{\rm sum}$ & 91.77 ± 0.24 & 95.08 ± 0.61 & 53.44 ± 0.18 & 39.63 ± 2.90 & 70.48 ± 1.24 & 95.08 ± 0.12 & 96.86 ± 0.06 \\
FAF$_{\rm std}$ & 93.54 ± 0.26 & 96.17 ± 0.10 & 54.77 ± 0.14 & 42.68 ± 2.75 & 62.70 ± 1.18 & 95.77 ± 0.12 & 96.97 ± 0.09 \\
\bottomrule
\end{tabular}
}

\bigskip 

\centering\resizebox{\linewidth}{!}{
\begin{tabular}{lccccccc}
\toprule
Dataset & cora & minesweeper & pubmed & questions & roman-empire & squirrel & wikics \\
\midrule
GCN & \textbf{84.38 ± 0.81} & \underline{97.48 ± 0.06} & \underline{80.00 ± 0.77} & \underline{78.44 ± 0.23} & \textbf{91.05 ± 0.15} & \underline{44.26 ± 1.22} & 80.06 ± 0.81 \\
GAT & 83.02 ± 1.21 & 97.00 ± 1.02 & 79.80 ± 0.94 & 77.72 ± 0.71 & 90.38 ± 0.49 & 39.31 ± 2.42 & \textbf{81.01 ± 0.23} \\
SAGE & \underline{83.18 ± 0.93} & \textbf{97.72 ± 0.70} & 77.42 ± 0.40 & 76.75 ± 1.07 & \underline{90.41 ± 0.10} & 40.22 ± 1.47 & \underline{80.57 ± 0.42} \\
MLP & 58.56 ± 1.75 & 51.74 ± 0.83 & 68.22 ± 0.96 & 70.40 ± 1.17 & 66.43 ± 0.12 & 39.11 ± 1.93 & 72.98 ± 0.49 \\
\midrule
FAF$_{\rm best val}$ & 82.84 ± 0.63 & 90.00 ± 0.39 & \textbf{80.96 ± 1.06} & \textbf{78.69 ± 0.50} & 78.11 ± 0.38 & \textbf{44.59 ± 1.62} & 80.25 ± 0.34 \\
\midrule

FAF$_4$ & 81.44 ± 0.38 & 90.01 ± 0.51 & 77.20 ± 0.45 & \textbf{78.69 ± 0.50} & 78.11 ± 0.38 & 44.02 ± 2.08 & 80.25 ± 0.34 \\
FAF$_{\rm mean,std}$ & 82.84 ± 0.63 & 90.17 ± 0.51 & \textbf{80.96 ± 1.06} & 75.82 ± 1.27 & 77.14 ± 0.52 & 43.83 ± 2.34 & 79.48 ± 0.81 \\
FAF$_{\rm mean}$ & 82.80 ± 0.70 & 90.00 ± 0.39 & 79.88 ± 0.92 & 76.83 ± 1.19 & 76.36 ± 0.55 & 42.44 ± 1.73 & 79.61 ± 0.56 \\
FAF$_{\rm max,std}$ & 79.34 ± 0.95 & 88.36 ± 0.74 & 77.52 ± 0.77 & 76.62 ± 0.79 & 75.89 ± 0.30 & \textbf{44.59 ± 1.62} & 78.44 ± 0.67 \\
FAF$_{\rm max}$ & 79.34 ± 0.67 & 86.39 ± 1.22 & 77.18 ± 0.13 & 77.59 ± 1.67 & 75.01 ± 0.43 & 43.03 ± 1.90 & 78.63 ± 0.35 \\
FAF$_{\rm sum}$ & 81.46 ± 0.62 & 89.96 ± 0.45 & 77.46 ± 0.43 & 76.12 ± 1.08 & 76.90 ± 0.28 & 44.07 ± 1.98 & 76.59 ± 0.36 \\
FAF$_{\rm std}$ & 79.50 ± 0.39 & 88.93 ± 0.68 & 79.06 ± 1.09 & 73.99 ± 1.67 & 73.80 ± 0.21 & 43.63 ± 1.43 & 76.09 ± 0.26 \\
\bottomrule
\end{tabular}
}
\end{table}

\subsection{Training, Validation, and Test Accuracy Curves} \label{app:curves}

In this section we compare training MLPs on FAF features to training GCNs, by tracking train/validation/test accuracy over epochs (Figure~\ref{fig:curves}).
Below we summarize the behaviors on datasets where differences arise between the two methods:

\begin{itemize}[leftmargin=1.5em]
    \item Amazon-Computer (\ref{fig:curve-amazon-computer}) and Amazon-Photo (\ref{fig:curve-amazon-photo}) behave similarly, but GCNs are more unstable.
    \item FAF for Chameleon (\ref{fig:curve-chameleon}) has much better training accuracy but similar generalization; in contrast, GCN for Squirrel (\ref{fig:curve-squirrel}) has much better training, but slightly worse generalization than FAF.
    \item Citeseer (\ref{fig:curve-citeseer}) with FAF breaks at the end of training, which indicates instability. However, this could be overcome with standard learning rate schedules.
    \item Coauthor-CS (\ref{fig:curve-coauthor-cs}) and Coauthor-Physics (\ref{fig:curve-coauthor-physics}) have dips in all metrics for both models.
    \item Questions (\ref{fig:curve-questions}) with FAF is more (locally) unstable but also more stationary and does not degrade performance later on.
    \item As mentioned in the main text, Minesweeper (\ref{fig:curve-minesweeper}) and Roman-Empire (\ref{fig:curve-roman-empire}) are the two datasets that seem to truly lose neighborhood information with FAF.
\end{itemize}


\newcommand{\pair}[2]{
  \begin{subfigure}{\linewidth}
    \centering
    \includegraphics[width=0.4\linewidth]{img/curves/#1-faf.pdf}\hfill
    \includegraphics[width=0.4\linewidth]{img/curves/#1-gcn.pdf}
    \subcaption{#2}\label{fig:curve-#1}
  \end{subfigure}
}

\begin{figure}
\centering
\pair{amazon-computer}{Amazon-Computer: FAF vs. GCN}

\pair{amazon-photo}{Amazon-Photo: FAF vs. GCN}

\pair{amazon-ratings}{Amazon-Ratings: FAF vs. GCN}

\pair{chameleon}{Chameleon: FAF vs. GCN}

\caption{Train, validation, and test accuracy of FAF+MLP versus GCN. (i)}
\label{fig:curves}
\end{figure}

\begin{figure}
\ContinuedFloat
\centering

\pair{citeseer}{Citeseer: FAF vs. GCN}

\pair{coauthor-cs}{Coauthor-CS: FAF vs. GCN}

\pair{coauthor-physics}{Coauthor-Physics: FAF vs. GCN}

\pair{cora}{Cora: FAF vs. GCN}

\caption[]{Train, validation, and test accuracy of FAF+MLP versus GCN. (ii)}

\end{figure}

\begin{figure}
\ContinuedFloat
\centering

\pair{minesweeper}{Minesweeper: FAF vs. GCN}

\pair{pubmed}{Pubmed: FAF vs. GCN}

\pair{questions}{Questions: FAF vs. GCN}

\pair{roman-empire}{Roman-Empire: FAF vs. GCN}

\caption[]{Train, validation, and test accuracy of FAF+MLP versus GCN. (iii)}

\end{figure}

\begin{figure}
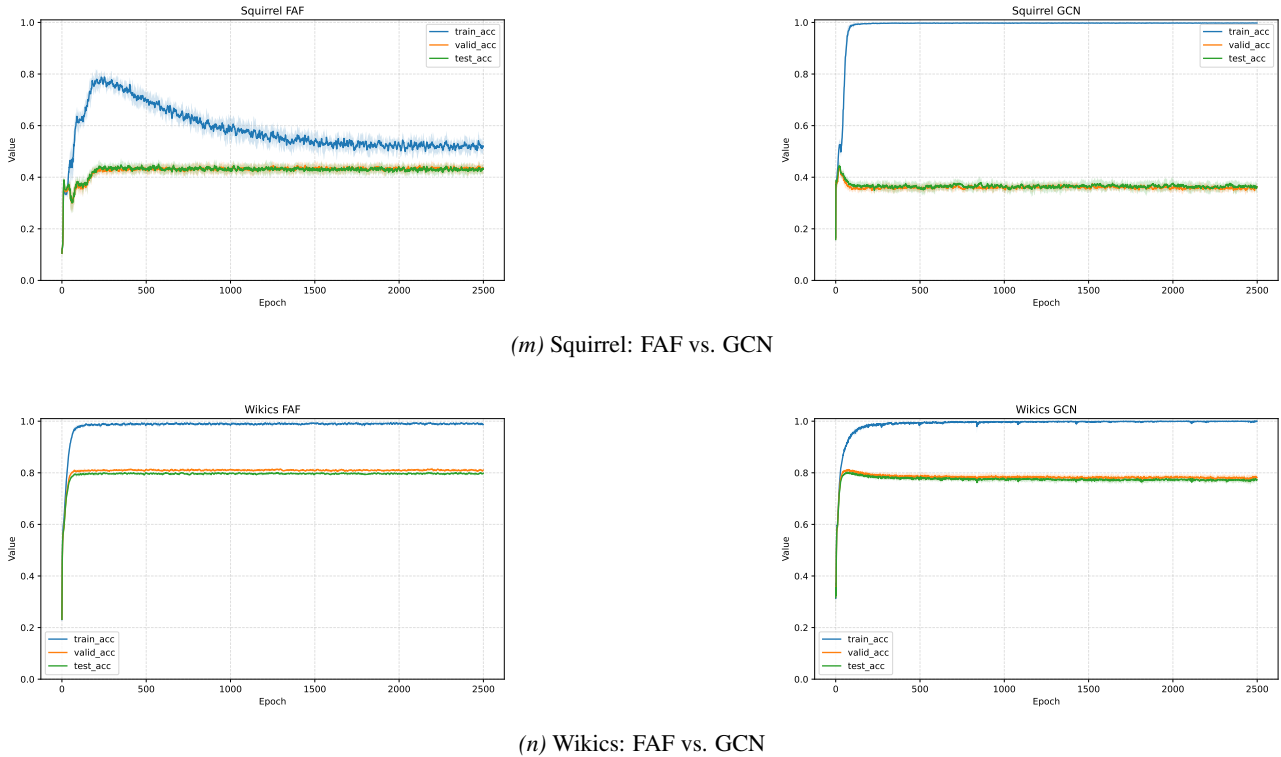

\ContinuedFloat
\centering

\pair{squirrel}{Squirrel: FAF vs. GCN}

\pair{wikics}{Wikics: FAF vs. GCN}

\caption[]{Train, validation, and test accuracy of FAF+MLP versus GCN. (iv)}
\end{figure}

\newpage

\section{Additional Experiments and Ablations}\label{app:additional}

\textbf{Number of hops.}\quad
In Tables~\ref{tab:locallayers} and \ref{tab:locallayersval} we include the test and validation for different number of hops concatenated as features for FAF$_4$. One hop already gives much of the information, and two hops often give the best performance.

\begin{table*}[h!]
\caption{Test accuracy for increasing number of concatenated hops in FAF$_4$.}
\label{tab:locallayers}
\centering\resizebox{\linewidth}{!}{
\begin{tabular}{lccccccc}
\toprule
Dataset & computer & photo & ratings & chameleon & citeseer & coauthor-cs & coauthor-physics \\
\midrule
FAF+0 & 87.75 ± 0.42 & 93.62 ± 0.36 & 49.04 ± 0.39 & 38.59 ± 3.29 & 52.98 ± 2.32 & 93.80 ± 0.19 & 96.02 ± 0.16 \\
FAF+1 & 93.41 ± 0.11 & 95.88 ± 0.17 & 53.61 ± 0.07 & \textbf{42.42 ± 3.44} & 65.80 ± 1.15 & 95.11 ± 0.18 & \textbf{96.99 ± 0.11} \\
FAF+2 & 93.76 ± 0.11 & 96.21 ± 0.36 & 53.88 ± 0.20 & 42.31 ± 3.02 & \textbf{69.40 ± 0.56} & \textbf{95.37 ± 0.33} & 96.86 ± 0.13 \\
FAF+4 & 93.93 ± 0.07 & \textbf{96.51 ± 0.14} & 54.67 ± 0.11 & 42.25 ± 2.33 & 51.48 ± 4.42 & 95.31 ± 0.10 & 96.86 ± 0.16 \\
FAF+8 & \textbf{94.12 ± 0.09} & 96.47 ± 0.26 & \textbf{54.81 ± 0.45} & 42.08 ± 1.90 & 42.12 ± 2.96 & 95.17 ± 0.23 & OOM \\
\bottomrule
\end{tabular}
}

\centering\resizebox{\linewidth}{!}{
\begin{tabular}{lccccccc}
\toprule
Dataset & cora & minesweeper & pubmed & questions & roman-empire & squirrel & wikics \\
\midrule
FAF+0 & 58.56 ± 1.75 & 51.74 ± 0.83 & 68.22 ± 0.96 & 70.40 ± 1.17 & 66.43 ± 0.12 & 39.11 ± 1.93 & 72.98 ± 0.49 \\
FAF+1 & 81.12 ± 1.14 & 88.15 ± 0.82 & 76.54 ± 0.21 & 76.00 ± 1.23 & 77.39 ± 0.19 & 43.71 ± 2.25 & 79.69 ± 0.33 \\
FAF+2 & \textbf{82.30 ± 0.31} & 89.76 ± 0.47 & \textbf{78.38 ± 0.56} & 78.43 ± 0.82 & \textbf{78.16 ± 0.30} & 44.22 ± 2.13 & 80.07 ± 0.40 \\
FAF+4 & 79.24 ± 0.52 & \textbf{90.01 ± 0.50} & 77.20 ± 0.45 & \textbf{78.54 ± 1.51} & 76.65 ± 0.41 & \textbf{45.09 ± 2.21} & \textbf{80.18 ± 0.59} \\
FAF+8 & 71.50 ± 0.82 & 89.21 ± 0.75 & 74.50 ± 0.34 & 77.85 ± 2.16 & 74.34 ± 0.92 & 44.17 ± 2.03 & 79.88 ± 0.70 \\
\bottomrule
\end{tabular}
}
\end{table*}

\begin{table*}[h!]
\caption{Validation accuracy for increasing number of concatenated hops in FAF$_4$.}
\label{tab:locallayersval}
\centering\resizebox{\linewidth}{!}{
\begin{tabular}{lccccccc}
\toprule
Dataset & computer & photo & ratings & chameleon & citeseer & coauthor-cs & coauthor-physics \\
\midrule
FAF+0 & 87.89 ± 0.13 & 93.33 ± 0.07 & 48.98 ± 0.72 & 41.43 ± 1.77 & 53.80 ± 0.82 & 93.70 ± 0.07 & 95.89 ± 0.02 \\
FAF+1 & 92.53 ± 0.08 & 96.14 ± 0.07 & 54.17 ± 0.14 & 46.91 ± 1.43 & 65.52 ± 0.64 & 94.84 ± 0.08 & \textbf{96.83 ± 0.01} \\
FAF+2 & 92.99 ± 0.08 & 96.23 ± 0.08 & 55.02 ± 0.67 & 47.30 ± 1.76 & \textbf{67.28 ± 0.64} & \textbf{94.93 ± 0.07} & 96.63 ± 0.02 \\
FAF+4 & \textbf{93.04 ± 0.10} & \textbf{96.23 ± 0.04} & 55.08 ± 0.19 & 48.42 ± 2.22 & 50.52 ± 3.84 & 94.88 ± 0.07 & 96.63 ± 0.04 \\
FAF+8 & 92.96 ± 0.13 & 96.21 ± 0.07 & \textbf{55.26 ± 0.30} & \textbf{48.74 ± 1.69} & 40.72 ± 2.26 & 94.92 ± 0.09 & OOM \\
\bottomrule
\end{tabular}
}
\medskip

\centering\resizebox{\linewidth}{!}{
\begin{tabular}{lccccccc}
\toprule
Dataset & cora & minesweeper & pubmed & questions & roman-empire & squirrel & wikics \\
\midrule
FAF+0 & 62.68 ± 1.15 & 51.12 ± 0.93 & 71.12 ± 0.52 & 71.58 ± 1.46 & 66.28 ± 0.27 & 40.57 ± 0.92 & 74.86 ± 0.33 \\
FAF+1 & 82.16 ± 0.33 & 87.65 ± 0.47 & 78.24 ± 0.36 & 77.44 ± 1.07 & 77.36 ± 0.55 & 47.26 ± 1.31 & 81.37 ± 0.51 \\
FAF+2 & \textbf{82.84 ± 0.38} & 89.48 ± 1.08 & 78.52 ± 0.41 & 79.71 ± 0.86 & \textbf{78.66 ± 0.19} & 47.18 ± 1.44 & \textbf{81.81 ± 0.46} \\
FAF+4 & 81.80 ± 0.35 & \textbf{89.63 ± 1.02} & \textbf{79.08 ± 0.36} & 79.67 ± 0.89 & 77.48 ± 0.17 & 47.61 ± 1.43 & 81.73 ± 0.53 \\
FAF+8 & 74.28 ± 0.23 & 89.10 ± 1.03 & 76.80 ± 0.20 & \textbf{79.94 ± 0.88} & 75.08 ± 0.23 & \textbf{47.95 ± 1.36} & 81.58 ± 0.63 \\
\bottomrule
\end{tabular}
}
\end{table*}

\textbf{Linear vs.\ MLP classifier.}\quad
In Table~\ref{tab:logreg} we show the performance of two classifiers on the same FAF$_4$ features: one linear layer and a multilayer perceptron\textemdash this being our choice for other experiments.

\newpage

\begin{table*}[h!]
\caption{Comparison of 1 linear layer (1L) versus multiple layers (MLP) as the classifier over FAF$_4$.}
\label{tab:logreg}
\centering\resizebox{\linewidth}{!}{
\begin{tabular}{lccccccc}
\toprule
Dataset & computer & photo & ratings & chameleon & citeseer & coauthor-cs & coauthor-physics \\
\midrule
FAF+MLP & 93.75 ± 0.04 & {96.54 ± 0.13} & 54.42 ± 0.45 & 42.96 ± 2.45 & 69.42 ± 1.32 & 95.33 ± 0.20 & 96.96 ± 0.09 \\

FAF+1L & {92.78 ± 0.06} & {96.19 ± 0.10} & {47.80 ± 0.23} & {43.97 ± 3.18} & {67.80 ± 1.22} & {94.07 ± 0.06} & {96.82 ± 0.11} \\

\bottomrule
\end{tabular}
}
\medskip 

\centering\resizebox{\linewidth}{!}{
\begin{tabular}{lccccccc}
\toprule
Dataset & cora & minesweeper & pubmed & questions & roman-empire & squirrel & wikics \\
\midrule
FAF+MLP & 81.44 ± 0.38 & 90.01 ± 0.51 & 77.20 ± 0.45 & {78.69 ± 0.50} & 78.11 ± 0.38 & 44.02 ± 2.08 & 80.25 ± 0.34 \\
FAF+1L & {81.10 ± 0.91} & {89.45 ± 0.65} & {77.00 ± 0.60} & {77.87 ± 1.05} & {74.62 ± 0.03} & {44.14 ± 2.90} & {79.77 ± 0.96} \\

\bottomrule
\end{tabular}
}
\end{table*}

\textbf{Last hop.}\quad
In Table~\ref{tab:lasthop} we ablate on using only the last hop as features to an MLP, or using the last hop concatenated to the original features. This mimics the choice of directly freezing a GNN and using its output as features to an (often linear) classifier. In almost all cases, having all hops is more performant, despite the increase in input size.

\begin{table*}[h!]
\caption{Using only the last hop ($H^{K}$), that and original features ($H^{0\oplus K}$), and all hops (FAF$_4$).}
\label{tab:lasthop}
\centering\resizebox{\linewidth}{!}{
\begin{tabular}{lccccccc}
\toprule
Dataset & computer & photo & ratings & chameleon & citeseer & coauthor-cs & coauthor-physics \\
\midrule
FAF$_4$ & 93.75 ± 0.04 & {96.54 ± 0.13} & 54.42 ± 0.45 & 42.96 ± 2.45 & 69.42 ± 1.32 & 95.33 ± 0.20 & 96.96 ± 0.09 \\
$H^{0\oplus K}$ & {93.13 ± 0.15} & {96.45 ± 0.04} & {54.39 ± 0.56} & 40.33 ± 4.49 & 68.48 ± 1.04 & {95.43 ± 0.33} & {96.82 ± 0.13} \\
$H^{K}$ & 92.68 ± 0.11 & 92.42 ± 0.11 & 48.87 ± 0.36 & {44.28 ± 2.63} & {68.64 ± 1.24} & 93.08 ± 0.23 & 96.33 ± 0.11 \\
\bottomrule
\end{tabular}
}
\medskip 

\centering\resizebox{\linewidth}{!}{
\begin{tabular}{lccccccc}
\toprule
Dataset & cora & minesweeper & pubmed & questions & roman-empire & squirrel & wikics \\
\midrule
FAF$_4$ & 81.44 ± 0.38 & 90.01 ± 0.51 & 77.20 ± 0.45 & {78.69 ± 0.50} & 78.11 ± 0.38 & 44.02 ± 2.08 & 80.25 ± 0.34 \\
$H^{0\oplus K}$ & {81.38 ± 0.59} & {73.28 ± 0.94} & {77.04 ± 0.27} & {77.80 ± 0.92} & {75.78 ± 0.05} & {43.07 ± 2.13} & {79.24 ± 0.40} \\
$H^{K}$ & 81.20 ± 0.97 & 69.96 ± 1.72 & 77.00 ± 0.64 & 77.66 ± 1.44 & 54.11 ± 0.42 & 42.15 ± 1.64 & 77.77 ± 0.77 \\

\bottomrule
\end{tabular}
}
\end{table*}

\textbf{KA reducer.}\quad
Table~\ref{tab:fafka} shows the last epoch training accuracy, best epoch validation accuracy, and corresponding test accuracy when using the Kolmogorov-Arnold function $\Phi$ as a {reducer for FAF}. Following \citet{Corso2020Principal}, we make it act on multisets by sorting, which we fix by the given data order.

\begin{table*}[h!] 
\caption{Last epoch training accuracy, best epoch validation accuracy, and corresponding test accuracy  of the KA function $\Phi$ from Theorem~\ref{thm:ka}.}
\label{tab:fafka}
\centering\resizebox{\linewidth}{!}{
\begin{tabular}{lccccccc}
\toprule
Dataset & computer & photo & ratings & chameleon & citeseer & coauthor-cs & coauthor-physics \\
\midrule
FAF$_{\rm {KA}}$ (train) & 94.73 ± 0.12 & 99.65 ± 0.09 & 99.92 ± 0.02 & 96.35 ± 3.44 & 70.17 ± 41.23 & 99.49 ± 0.22 & 99.84 ± 0.28 \\

FAF$_{\rm {KA}}$ (val) & 87.88 ± 0.19 & 93.40 ± 0.07 & 51.97 ± 0.12 & 41.65 ± 1.91 & 55.76 ± 1.07 & 93.66 ± 0.08 & 95.90 ± 0.05 \\
FAF$_{\rm {KA}}$ (test)& {87.56 ± 0.42} & {93.73 ± 0.07} & {51.46 ± 0.50} & {35.96 ± 3.89} & {56.18 ± 1.50} & {93.70 ± 0.08} & {95.97 ± 0.00} \\

\bottomrule
\end{tabular}
}
\medskip 

\centering\resizebox{\linewidth}{!}{
\begin{tabular}{lccccccc}
\toprule
Dataset & cora & minesweeper & pubmed & questions & roman-empire & squirrel & wikics \\
\midrule
FAF$_{\rm {KA}}$ (train) & 14.29 ± 0.00 & 51.68 ± 0.44 & 33.33 ± 0.00 & 99.85 ± 0.09 & 86.41 ± 0.23 & 39.30 ± 2.30 & 97.64 ± 1.09 \\
FAF$_{\rm {KA}}$ (val) & 29.56 ± 0.79 & 51.61 ± 0.42 & 42.80 ± 0.76 & 74.28 ± 1.95 & 80.45 ± 0.25 & 40.43 ± 1.10 & 77.38 ± 0.88 \\
FAF$_{\rm {KA}}$ (test) & {29.78 ± 0.91} & {52.21 ± 0.81} & {41.06 ± 3.63} & {71.82 ± 0.36} & {80.33 ± 0.47} & {38.62 ± 1.24} & {76.31 ± 0.77} \\
\bottomrule
\end{tabular}
}
\end{table*}

\textbf{Rewiring augmentations.}\quad
In Table~\ref{tab:rewiring} we include results on rewiring the input graph by deleting edges based on pairwise cosine similarity, as mentioned in \S~\ref{s:advantages}. We use mean as reducer for the original and the rewired aggregation. Coauthor-Physics is not included, as it has a large amount of original features. REW includes hop-wise features where all negative similarity neighbors are set to 0. SP includes hop-wise features where positive and negative similarity neighbors are aggregated in different features and concatenated together. Rewiring methods can provide extra signal for the tasks and mitigate fundamental issues like over-squashing and over-smoothing, and are combinable with FAFs.

\newpage

\begin{table*}[h!]
\caption{Feature augmentations based on similarity-based rewiring (REW)~\citep{rubio-madrigal2025gnns} and computational-graph splitting (SP)~\citep{pmlr-v269-roth25a}. Comparison of test accuracy to FAF$_{\rm {mean}}$ and to their combination.}
\label{tab:rewiring}

\centering\resizebox{\linewidth}{!}{%
\begin{tabular}{lccccccc}
\toprule
Dataset & computer & photo & ratings & chameleon & citeseer & coauthor-cs & cora \\
\midrule
FAF$_{\rm mean}$ & {94.01 ± 0.21} & \underline{96.71 ± 0.16} & 53.12 ± 0.44 & 43.21 ± 2.24 & 66.82 ± 1.74 & \underline{95.37 ± 0.17} & \textbf{82.80 ± 0.70} \\
REW$_{\rm mean}$ & \underline{94.22 ± 0.15} & 96.58 ± 0.23 & \underline{54.43 ± 0.12} & 38.54 ± 1.94 & \underline{67.96 ± 0.82} & 95.27 ± 0.09 & 80.88 ± 0.80 \\ 
SP$_{\rm mean}$ & 94.10 ± 0.23 & \textbf{96.75 ± 0.36} & \textbf{54.60 ± 0.38} & 39.77 ± 3.20 & 67.38 ± 1.74 & \textbf{95.46 ± 0.04} & 80.38 ± 0.47 \\
FAF$_{\rm mean}$+REW$_{\rm mean}$ & 94.16 ± 0.17 & {96.64 ± 0.16} & 53.48 ± 0.28 & \textbf{44.31 ± 3.82} & 67.58 ± 1.07 & 95.34 ± 0.21 & \underline{82.68 ± 0.98} \\
FAF$_{\rm mean}$+SP$_{\rm mean}$ & \textbf{94.35 ± 0.04} & 96.69 ± 0.04 & 53.48 ± 0.67 & \underline{43.48 ± 3.71} & \textbf{68.04 ± 1.30} & 95.29 ± 0.16 & 82.38 ± 0.64 \\
\bottomrule
\end{tabular}%
}

\medskip

\centering\resizebox{\linewidth}{!}{%
\begin{tabular}{lcccccc}
\toprule
Dataset & minesweeper & pubmed & questions & roman-empire & squirrel & wikics \\
\midrule

FAF$_{\rm mean}$ & \textbf{90.00 ± 0.39} & 79.88 ± 0.92 & \underline{76.83 ± 1.19} & 76.36 ± 0.55 & 42.44 ± 1.73 & 79.61 ± 0.56 \\
REW$_{\rm mean}$ & 59.57 ± 1.88 & 79.74 ± 0.59 & 75.65 ± 0.28 & 75.75 ± 0.42 & 38.24 ± 1.73 & 80.44 ± 0.37 \\
SP$_{\rm mean}$ & 59.44 ± 2.27 & \underline{80.06 ± 0.63} & 75.83 ± 0.54 & 75.51 ± 0.16 & 38.40 ± 2.11 & \underline{80.54 ± 0.37} \\
FAF$_{\rm mean}$+REW$_{\rm mean}$ & {89.71 ± 0.31} & \textbf{80.16 ± 0.42} & \textbf{77.15 ± 1.25} & \textbf{77.61 ± 0.54} & \textbf{43.25 ± 1.86} & \textbf{80.56 ± 0.52} \\
FAF$_{\rm mean}$+SP$_{\rm mean}$ & \underline{89.71 ± 0.42} & 79.96 ± 0.71 & 76.58 ± 1.19 & \underline{77.18 ± 0.26} & \underline{42.54 ± 1.65} & 80.41 ± 0.67 \\
\bottomrule
\end{tabular}%
}

\end{table*}

\textbf{SHAP on other datasets.}\quad
In Figure~\ref{fig:shap} we show two more plots of feature importance using SHAP \citep{shap} for Pubmed and Amazon-Ratings, on the MLP over single-{reducer} FAFs. Features are sorted by global importance and broken down over the different hops by color. While the implementation of SHAP on MLPs used (GradientExplainer) relies on local linearization and often assumes input feature independence, the explanations still reveal informative qualitative patterns. In Pubmed, feature 346 is most important at hops 1 and 2, and remains second at hops 0 and 4, whereas the most important base feature (205) contributes little at other hops. 
By contrast, in Amazon-Ratings, importance is more evenly distributed across features and hops.

\begin{figure*}[h!] 
    \centering
    \includegraphics[width=0.9\linewidth]{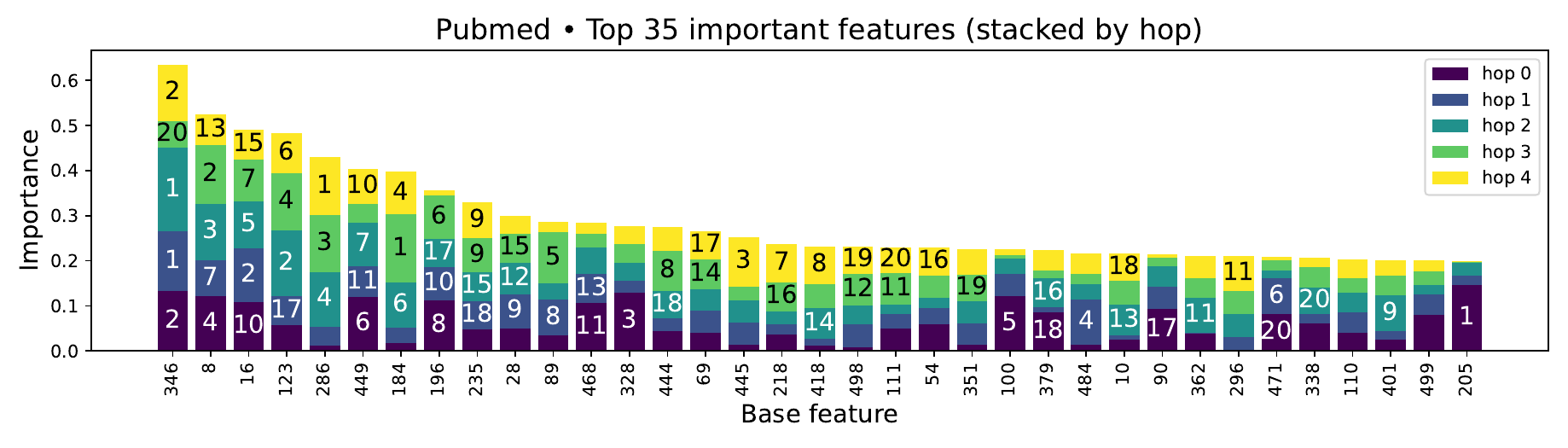}
    
    \includegraphics[width=0.9\linewidth]{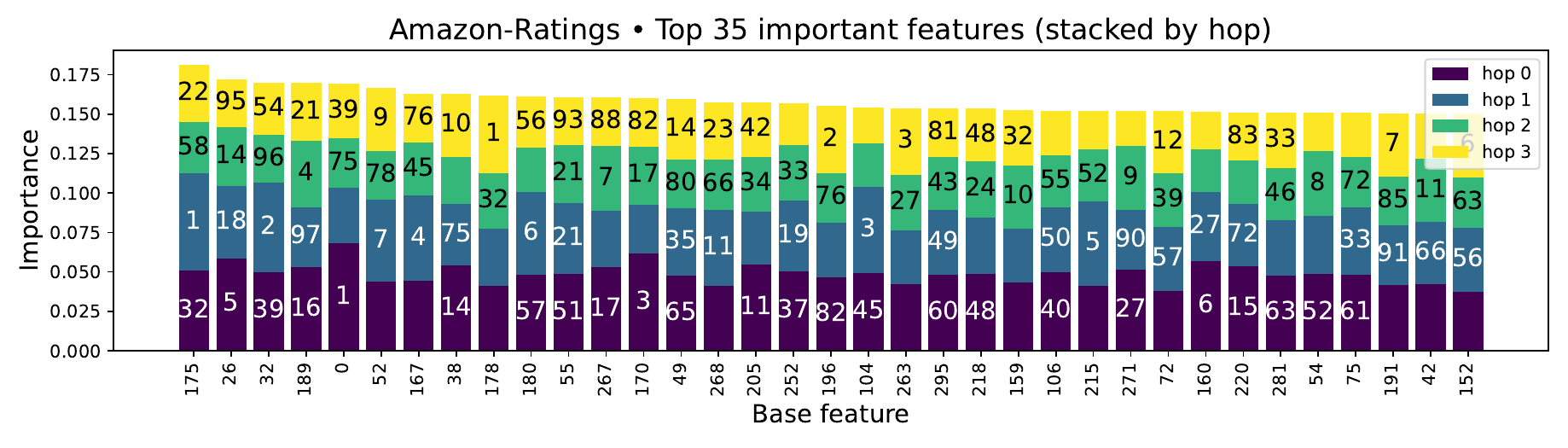}
    
    \caption{SHAP feature importance for Pubmed and Amazon-Ratings. The base features are ranked according to the sum of their importance values across hops. Numbers on the stacked bars indicate the ranking of that particular feature on that particular hop. }
    \label{fig:shap}
\end{figure*}

\newpage

\section{Comparison to Graph Echo State Networks}\label{app:gesn}

{
Graph Echo State Networks (GESN) \citep{gesn} compute label-independent node embeddings via fixed “reservoir” layers, one per hop, followed by a linear readout. We compare FAFs to this approach in Table~\ref{tab:gesn} as a representative previously proposed simplification of GNNs. We do not include Coauthor-Physics or {Questions}, as GESN exceeds memory capacity on them. 
We use the public implementation at \url{https://github.com/dtortorella/graph-esn}, keeping most GESN-specific hyperparameters as in their example. We set the ``depth" (local layers) and ``hidden units" (hidden channels) as in the best GCN. Each layer is given 10 minutes to compute its embedding.
As the classifier, we replace the original linear readout with a MLP of the same architecture as FAFs to provide a fair comparison of the role of the embeddings. For the MLP hyperparameters, we try the best GCN and the best FAF configurations. Since our different FAFs could share MLP hyperparameters, we expected them to transfer well, but in this case the GCN ones perform better. As shown, FAFs seem to be more suitable for the tested benchmark tasks.
}

\begin{table*}[h!]
\caption{Test accuracy of GESN \citep{gesn} against FAFs.}
\label{tab:gesn}
\centering
\begin{tabular}{lccccccc}
\toprule
Dataset & computer & photo & ratings & chameleon & citeseer & coauthor-cs \\ \midrule
FAF$_{\rm best val}$ & 94.01 ± 0.21 & 96.54 ± 0.13 & 55.09 ± 0.24 & 42.96 ± 2.45 & 70.48 ± 1.24 & 95.37 ± 0.17 \\
GESN+MLP & 90.80 ± 0.10 & 92.72 ± 0.27 & 50.34 ± 0.26 & 41.64 ± 3.63 & 41.44 ± 0.34 & 89.99 ± 0.09 & \\
\bottomrule
\end{tabular}

\medskip 

\begin{tabular}{lccccccc}
\toprule
Dataset & cora & minesweeper & pubmed & roman-empire & squirrel & wikics \\
\midrule
FAF$_{\rm best val}$ & 82.84 ± 0.63 & 90.00 ± 0.39 & 80.96 ± 1.06 & 78.11 ± 0.38 & 44.59 ± 1.62 & 80.25 ± 0.34 \\
GESN+MLP &65.78 ± 0.26 & 50.93 ± 1.28 & 64.98 ± 1.57 &  11.76 ± 0.38 & 36.58 ± 1.18 & 73.98 ± 0.85 \\

\bottomrule
\end{tabular}
\end{table*}

\section{Preliminary Results on Other Benchmarks}\label{app:graphland}
{
We include preliminary results for FAFs on the GraphLand bechmark \citep{bazhenov2025graphland}. We do \textit{not} perform the full hyperparameter sweep, therefore we indicate FAFs with an asterisk (*), as there could be better performing versions. We copy baselines from the original paper: MLP, MLP-NFA (one-hop FAFs), GCN and GAT. 
For FAFs, we only report results for mean+std and mean aggregations. 
We choose the best validation hyperparameters we have been able to find so far (shown in Table~\ref{tab:graphlandhyps}) and report the resulting test accuracy 
in Table~\ref{tab:graphland}. While we do not yet achieve the performance of GATs, we approach that of GCNs, and we improve upon MLPs and NFA.}

\begin{table*}[h!]
    \centering
    \caption{Test accuracy of 4 GraphLand datasets (averaged over 10 runs).}\label{tab:graphland}
    \begin{tabular}{lcccc}\toprule
        Dataset & artnet-exp & hm-categories & tolokers-2 & pokec-regions \\ \midrule
        ResMLP & 35.07 ± 2.34 & 37.72 ± 0.18 & 41.16 ± 1.13 & 4.88 ± 0.01 \\
        ResMLP-NFA & 38.25 ± 0.56 & 48.72 ± 0.38 & 48.14 ± 1.40 & 8.05 ± 0.03 \\
        GCN & 43.09 ± 0.38 & 61.70 ± 0.35 & 51.32 ± 0.96 & 34.96 ± 0.38 \\ 
        GAT & 46.62 ± 0.32 & 67.96 ± 0.33 & 53.78 ± 1.34 & 46.17 ± 0.32 \\ \midrule
        FAF*$_{\rm mean,std}$ & 41.56 ± 0.26 & 59.50 ± 0.15 & 52.74 ± 0.53 & 28.44 ± 0.22 \\
        FAF*$_{\rm mean}$ & 39.25 ± 0.38 & 54.50 ± 0.13 & 50.72 ± 0.38 & 31.23 ± 0.18 \\ \bottomrule
    \end{tabular}
\end{table*}

\begin{table*}[h!]
    \centering
    \caption{Best hyperparameters found so far for FAFs on GraphLand datasets.}\label{tab:graphlandhyps}
\centering
    \begin{tabular}{lccccccc}\toprule
        ~ & dropout & lr & bn & hidden channels & weight decay & local layers & mlp layers \\ \midrule
        artnet-exp & 0.7 & 0.01 & 1 & 256 & 0.01 & 3 & 2 \\
        hm-categories & 0.5 & 0.001 & 1 & 512 & 0.0005 & 3 & 3 \\
        tolokers-2 & 0.3 & 0.0001 & 1 & 512 & 5e-05 & 3 & 5 \\
        pokec-regions & 0 & 0.001 & 1 & 512 & 0.001 & 3 & 3 \\ \bottomrule
    \end{tabular}
\end{table*}

We also include preliminary results on GraphBench \citep{stoll2026graphbenchnextgenerationgraphlearning}. We include preliminary results on their first two tasks (test MAE, lower is better). Their baselines are not tuned so neither do we, but we use the hyperparameters from GraphLand's tasks. Again, an asterisk (*) denotes there could be better FAFs. We show that FAF*$_\text{mean}$ performs better than the best reported model (MPGNN).

\begin{table*}[h!]
    \centering
    \caption{Test MAE of 2 GraphBench datasets (lower is better).}\label{tab:graphland}
    \begin{tabular}{lcc}\toprule
        Dataset & quotes & replies  \\ \midrule
        GNN & 0.768 ± 0.002 & 0.694 ± 0.002 \\
        FAF*$_\text{mean}$ & 0.661 ± 0.002 & 0.579 ± 0.001
        \\ \bottomrule
    \end{tabular}
\end{table*}

Newer benchmarks have not yet been as widely studied as the benchmarks we focus on, and the limits of state-of-the-art models on them are not yet well understood. For instance, it is still unclear whether classic GNNs rival more complex architectures on these datasets. For this reason, we do not perform the same extensive hyperparameter tuning. A more thorough search could yield a better-performing FAF pipeline, so, for instance, we cannot definitely conclude that GraphLand requires learned aggregation; especially so, as we can already rival non-attention-based models. This is consistent with our broader finding that datasets requiring learned aggregation tend to be those that benefit from significantly deeper architectures (more than 10 layers), a pattern we have not observed in the well-studied shallow benchmarks. 

We append these preliminary results primarily to motivate the community to include well-tuned FAF baselines alongside graph-agnostic baselines in future benchmark evaluations, as they increase performance considerably from MLPs. In terms of tuning, we note, for instance, that the hyperparameter tuning of GraphLand includes Dropout of up to 0.2, while for MLPs we have found larger values (up to 0.7) to perform better, as well as BatchNorm in exchange of LayerNorm. Our secondary message is that graph-agnostic baselines require well-chosen hyperparameter ranges in order to compete fairly, which they frequently achieve when fed with our well-constructed features.

\section{Original results from \citep{luo2024classic}}

For convenience, in Tables~\ref{tab:tab2}, \ref{tab:tab3} we present the original results tables from the benchmark we evaluate against \citep{luo2024classic}. This shows that the classic GNNs also rival graph transformers and heterophily-aware models. The numbers we report may differ within error, as we re-run their baselines.

\definecolor{darkgreen}{rgb}{0.0, 0.5, 0.0}
\definecolor{peach}{rgb}{1.0, 0.85, 0.7}
\definecolor{mediumgreen}{RGB}{60,179,113}
\definecolor{customcyan}{RGB}{10, 204, 0} 
\definecolor{tealblue}{RGB}{0, 132, 194}
\definecolor{darkorange}{RGB}{220, 100, 0}

\begin{table*}[t]
    \centering
    \caption{\citep{luo2024classic} Node classification results over homophilous graphs (\%). The top \textbf{\textcolor{customcyan}{$\mathbf{1^{st}}$}}, \textbf{\textcolor{tealblue!90}{$\mathbf{2^{nd}}$}} and \textbf{\textcolor{darkorange!90}{$\mathbf{3^{rd}}$}} results are highlighted. }
    \setlength\tabcolsep{3pt}
    \resizebox{\linewidth}{!}{
    \begin{tabular}{l|llllllll}
        \toprule
            & Cora   &CiteSeer   & PubMed      
            &Computer &Photo &CS &Physics &WikiCS 
            \\
         \midrule 
        \# nodes    & 2,708   & 3,327     & 19,717   &13,752 &7,650 &18,333 &34,493 &11,701
        \\
        \# edges & 5,278 & 4,732 & 44,324 &245,861 &119,081 &81,894 &247,962 &216,123
        \\
         Metric & Accuracy$\uparrow$  & Accuracy$\uparrow$
         & Accuracy$\uparrow$ &Accuracy$\uparrow$ &Accuracy$\uparrow$ &Accuracy$\uparrow$ &Accuracy$\uparrow$ &Accuracy$\uparrow$ 
         \\
        \midrule %
        
       GraphGPS & 82.84 {\tiny{± 1.03}} & 72.73 {\tiny{± 1.23}} & 79.94 {\tiny{± 0.26}} & 91.19 {\tiny{± 0.54}}&  95.06 {\tiny{± 0.13}}&  93.93 {\tiny{± 0.12}}&  97.12 {\tiny{± 0.19}} &  78.66 {\tiny{± 0.49}} \\
        \rowcolor{gray!20} 
        \textbf{GraphGPS$^*$} & 83.87 {\tiny{± 0.96}} & \textcolor{tealblue!90}{\textbf{72.73}} {\tiny{± 1.23}} & 79.94 {\tiny{± 0.26}} & 91.79 {\tiny{± 0.63}}&  94.89 {\tiny{± 0.14}}&  94.04 {\tiny{± 0.21}}&  96.71 {\tiny{± 0.15}} &  78.66 {\tiny{± 0.49}} \\
        NAGphormer & 82.12 {\tiny{± 1.18}}  & 71.47 {\tiny{± 1.30}}  & 79.73 {\tiny{± 0.28}}  &91.22 {\tiny{± 0.14}} &  95.49 {\tiny{± 0.11}} & 95.75 {\tiny{± 0.09}} & 97.34 {\tiny{± 0.03}} & 77.16 {\tiny{± 0.72}}  \\
        \rowcolor{gray!20}
        \textbf{NAGphormer$^*$} & 80.92 {\tiny{± 1.17}} & 70.59 {\tiny{± 0.89}} & 80.14 {\tiny{± 1.06}} & 91.69 {\tiny{± 0.30}} & 96.14 {\tiny{± 0.16}} & 95.85 {\tiny{± 0.16}} & \textcolor{tealblue!90}{\textbf{97.35}} {\tiny{± 0.12}} & 77.92 {\tiny{± 0.93}} \\
        Exphormer & 82.77 {\tiny{± 1.38}}  & 71.63 {\tiny{± 1.19}} & 79.46 {\tiny{± 0.35}}  &91.47 {\tiny{± 0.17}}  & 95.35 {\tiny{± 0.22}} & 94.93 {\tiny{± 0.01}} & 96.89 {\tiny{± 0.09}} & 78.54 {\tiny{± 0.49}} \\
        \rowcolor{gray!20}
        \textbf{Exphormer$^*$} & 83.29 {\tiny{± 1.36}} & 71.85 {\tiny{± 1.11}} & 79.67 {\tiny{± 0.73}} & 91.80 {\tiny{± 0.35}} & 95.69 {\tiny{± 0.39}} & 95.92 {\tiny{± 0.25}} & 97.06 {\tiny{± 0.13}} & 79.38 {\tiny{± 0.62}} \\
        GOAT       & 83.18 {\tiny{± 1.27}} & 71.99 {\tiny{± 1.26}}  & 79.13 {\tiny{± 0.38}} &  90.96 {\tiny{± 0.90}} & 92.96 {\tiny{± 1.48}} & 94.21 {\tiny{± 0.38}} & 96.24 {\tiny{± 0.24}} & 77.00 {\tiny{± 0.77}}\\
        \rowcolor{gray!20}
        \textbf{GOAT$^*$} & 83.26 {\tiny{± 1.24}} & 72.21 {\tiny{± 1.29}} & 80.06 {\tiny{± 0.67}} & 92.29 {\tiny{± 0.37}} & 94.33 {\tiny{± 0.21}} & 93.81 {\tiny{± 0.19}} & 96.47 {\tiny{± 0.16}} & 77.96 {\tiny{± 0.63}}\\
        NodeFormer & 82.20 {\tiny{± 0.90}}  & 72.50 {\tiny{± 1.10}} & 79.90 {\tiny{± 1.00}}  & 86.98 {\tiny{± 0.62}} & 93.46 {\tiny{± 0.35}} & 95.64 {\tiny{± 0.22}} & 96.45 {\tiny{± 0.28}} & 74.73 {\tiny{± 0.94}} \\
        \rowcolor{gray!20}
        \textbf{NodeFormer$^*$} & 82.73 {\tiny{± 0.75}}  & 72.37 {\tiny{± 1.20}} & 79.59 {\tiny{± 0.92}}  & 87.29 {\tiny{± 0.58}} & 93.43 {\tiny{± 0.56}} & 95.69 {\tiny{± 0.27}} & 96.48 {\tiny{± 0.34}} & 75.13 {\tiny{± 0.93}} \\
        SGFormer    & 84.50 {\tiny{± 0.80}} & 72.60 {\tiny{± 0.20}} & 80.30 {\tiny{± 0.60}}  & 91.99 {\tiny{± 0.76}} & 95.10 {\tiny{± 0.47}} & 94.78 {\tiny{± 0.20}} & 96.60 {\tiny{± 0.18}} & 73.46 {\tiny{± 0.56}}
        
        \\
        \rowcolor{gray!20}
        \textbf{SGFormer$^*$}    & \textcolor{tealblue!90}{\textbf{84.82}} {\tiny{± 0.85}} & \textcolor{darkorange!90}{\textbf{72.72}} {\tiny{± 1.15}} & \textcolor{tealblue!90}{\textbf{80.60}} {\tiny{± 0.49}}  & 92.42 {\tiny{± 0.66}} & 95.58 {\tiny{± 0.36}} & 95.71 {\tiny{± 0.24}} & 96.75 {\tiny{± 0.26}} & 80.05 {\tiny{± 0.46}}
        
        \\
        Polynormer & 83.25 {\tiny{± 0.93}}  & 72.31 {\tiny{± 0.78}} & 79.24 {\tiny{± 0.43}} & 93.68 {\tiny{± 0.21}}  & 96.46 {\tiny{± 0.26}} & 95.53 {\tiny{± 0.16}}  & 97.27 {\tiny{± 0.08}} & 80.10 {\tiny{± 0.67}}  \\
        \rowcolor{gray!20}
        \textbf{Polynormer$^*$} & 83.43 {\tiny{± 0.89}}  & 72.19 {\tiny{± 0.83}} & 79.35 {\tiny{± 0.73}} & \textcolor{darkorange!90}{\textbf{93.78}} {\tiny{± 0.10}}  & \textcolor{darkorange!90}{\textbf{96.57}} {\tiny{± 0.23}} & 95.42 {\tiny{± 0.19}}  & 97.18 {\tiny{± 0.11}} & 80.26 {\tiny{± 0.92}}  \\
        \midrule 
        GCN        & 81.60 {\tiny{± 0.40}} & 71.60 {\tiny{± 0.40}} & 78.80 {\tiny{± 0.60}} & 89.65 {\tiny{± 0.52}}  &92.70 {\tiny{± 0.20}}  &92.92 {\tiny{± 0.12}}  &96.18 {\tiny{± 0.07}}  &77.47 {\tiny{± 0.85}}\\
        \rowcolor{gray!20}
        \textbf{GCN$^*$}        & \textcolor{customcyan}{\textbf{85.10}} {\tiny{± 0.67}} \textbf{3.50$\uparrow$} & \textcolor{customcyan}{\textbf{73.14}} {\tiny{± 0.67}} \textbf{1.54$\uparrow$} & \textcolor{customcyan}{\textbf{81.12}} {\tiny{± 0.52}} \textbf{2.32$\uparrow$} & \textcolor{tealblue!90}{\textbf{93.99}} {\tiny{± 0.12}} \textbf{4.34$\uparrow$}  &{{96.10}} {\tiny{± 0.46}} \textbf{3.40$\uparrow$}  &\textcolor{darkorange!90}{\textbf{96.17}} {\tiny{± 0.06}} \textbf{3.25$\uparrow$}  &\textcolor{customcyan}{\textbf{97.46}} {\tiny{± 0.10}} \textbf{1.28$\uparrow$}  &\textcolor{darkorange!90}{\textbf{80.30}}  {\tiny{± 0.62}} \textbf{2.83$\uparrow$}\\

        \midrule 
        GraphSAGE & 82.68 {\tiny{± 0.47}} & 71.93 {\tiny{± 0.85}} & 79.41 {\tiny{± 0.53}} &91.20 {\tiny{± 0.29}} &94.59 {\tiny{± 0.14}} &93.91 {\tiny{± 0.13}} &96.49 {\tiny{± 0.06}} & 74.77 {\tiny{± 0.95}}\\
        \rowcolor{gray!20}
        \textbf{GraphSAGE$^*$} & 83.88 {\tiny{± 0.65}} \textbf{1.20$\uparrow$} & 72.26 {\tiny{± 0.55}} \textbf{0.33$\uparrow$} & 79.72 {\tiny{± 0.50}} \textbf{0.31$\uparrow$} &93.25 {\tiny{± 0.14}} \textbf{2.05$\uparrow$} &\textcolor{customcyan}{\textbf{96.78}} {\tiny{± 0.23}} \textbf{2.19$\uparrow$} &\textcolor{customcyan}{\textbf{96.38}} {\tiny{± 0.11}} \textbf{2.47$\uparrow$} &97.19 {\tiny{± 0.05}} \textbf{0.70$\uparrow$} & \textcolor{tealblue!90}{\textbf{80.69}} {\tiny{± 0.31}} \textbf{5.92$\uparrow$} \\

        \midrule 
        GAT        & 83.00 {\tiny{± 0.70}} & 72.10 {\tiny{± 1.10}} & 79.00 {\tiny{± 0.40}} & 90.78 {\tiny{± 0.13}}  &93.87 {\tiny{± 0.11}}  &93.61 {\tiny{± 0.14}}  &96.17 {\tiny{± 0.08}}  &76.91 {\tiny{± 0.82}} \\
        \rowcolor{gray!20}
        \textbf{GAT$^*$}        & \textcolor{darkorange!90}{\textbf{84.46}} {\tiny{± 0.55}} \textbf{1.46$\uparrow$} & 72.22 {\tiny{± 0.84}} \textbf{0.12$\uparrow$} & \textcolor{darkorange!90}{\textbf{80.28}} {\tiny{± 0.64}} \textbf{1.28$\uparrow$} & \textcolor{customcyan}{\textbf{94.09}} {\tiny{± 0.37}} \textbf{3.31$\uparrow$}  &\textcolor{tealblue}{\textbf{96.60}} {\tiny{± 0.33}} \textbf{2.73$\uparrow$} &\textcolor{tealblue!90}{\textbf{96.21}} {\tiny{± 0.14}}  \textbf{2.60$\uparrow$} &\textcolor{darkorange!90}{\textbf{97.25}} {\tiny{± 0.06}} \textbf{1.08$\uparrow$} &\textcolor{customcyan}{\textbf{81.07}}  {\tiny{± 0.54}} \textbf{4.16$\uparrow$} \\
        \bottomrule
    \end{tabular}
    }
    \label{tab:tab2}
\end{table*}

\begin{table*}[t]
    \centering
    {
    \caption{\citep{luo2024classic} Node classification results on heterophilous graphs (\%). The top \textbf{\textcolor{customcyan}{$\mathbf{1^{st}}$}}, \textbf{\textcolor{tealblue!90}{$\mathbf{2^{nd}}$}} and \textbf{\textcolor{darkorange!90}{$\mathbf{3^{rd}}$}} results are highlighted.}
    \vspace{-0.05 in}
    \setlength\tabcolsep{4pt}
\resizebox{\linewidth}{!}{
    \begin{tabular}{l|llllll}
        \toprule
& Squirrel & Chameleon &Amazon-Ratings &Roman-Empire &Minesweeper &Questions \\
\midrule 
\# nodes  & 2223  & 890 &24,492 & 22,662 &10,000 & 48,921 \\
\# edges & 46,998 & 8,854   &93,050 &32,927 & 39,402 & 153,540 \\
 Metric & Accuracy$\uparrow$ & Accuracy$\uparrow$ & Accuracy$\uparrow$ &Accuracy$\uparrow$ &ROC-AUC$\uparrow$&ROC-AUC$\uparrow$  \\
 \midrule 
H2GCN &  35.10 {\tiny{± 1.15}} & 26.75 {\tiny{± 3.64}} & 36.47 {\tiny{± 0.23}} &60.11 \tiny{± 0.52} &89.71 \tiny{± 0.31} &63.59 \tiny{± 1.46} \\ 
CPGNN &  30.04 {\tiny{± 2.03}} & 33.00 {\tiny{± 3.15}} & 39.79 {\tiny{± 0.77}} &63.96 \tiny{± 0.62} &52.03 \tiny{± 5.46} &65.96 \tiny{± 1.95} \\ 
GPRGNN&  38.95 {\tiny{± 1.99}} & 39.93 {\tiny{± 3.30}} & 44.88 {\tiny{± 0.34}} &64.85 \tiny{± 0.27} &86.24 \tiny{± 0.61} &55.48 \tiny{± 0.91} \\ 
FSGNN&  35.92 {\tiny{± 1.32}} & 40.61 {\tiny{± 2.97}}  & 52.74 {\tiny{± 0.83}} &79.92 \tiny{± 0.56} &90.08 \tiny{± 0.70} &\textcolor{darkorange!90}{\textbf{78.86}} \tiny{± 0.92} \\ 
GloGNN&  35.11 {\tiny{± 1.24}} & 25.90 {\tiny{± 3.58}} & 36.89 {\tiny{± 0.14}} &59.63 \tiny{± 0.69} &51.08 \tiny{± 1.23} &65.74 \tiny{± 1.19}  \\ 
 \midrule 
GraphGPS& 39.67 {\tiny{± 2.84}} &40.79 {\tiny{± 4.03}}   & 53.10 {\tiny{± 0.42}} &82.00 \tiny{± 0.61} &90.63 \tiny{± 0.67} &71.73 \tiny{± 1.47}  \\ 
\rowcolor{gray!20}
\textbf{GraphGPS$^*$}& 39.81 {\tiny{± 2.28}} &41.55 {\tiny{± 3.91}}   & 53.27 {\tiny{± 0.66}} &82.72 \tiny{± 0.68} &90.75 \tiny{± 0.89} &72.56 \tiny{± 1.33}  \\ 
NodeFormer&  38.52 {\tiny{± 1.57}} & 34.73 {\tiny{± 4.14}} &  43.86 {\tiny{± 0.35}} &64.49 \tiny{± 0.73}& 86.71 \tiny{± 0.88}& 74.27 \tiny{± 1.46}\\ 
\rowcolor{gray!20}
\textbf{NodeFormer$^*$}&  38.89 {\tiny{± 2.67}} & 36.38 {\tiny{± 3.85}} &  43.79 {\tiny{± 0.57}} &74.83 \tiny{± 0.81}& 87.71 \tiny{± 0.69}& 75.02 \tiny{± 1.61}\\ 
SGFormer&  41.80 {\tiny{± 2.27}} & 44.93 {\tiny{± 3.91}} &  48.01 {\tiny{± 0.49}} &79.10 \tiny{± 0.32}& 90.89 \tiny{± 0.58}& 72.15 \tiny{± 1.31} \\ 
\rowcolor{gray!20}
\textbf{SGFormer$^*$}&  \textcolor{tealblue!90}{\textbf{42.65}} {\tiny{± 2.41}} & \textcolor{tealblue!90}{\textbf{45.21}} {\tiny{± 3.72}} &  54.14 {\tiny{± 0.62}} &80.01 \tiny{± 0.44}& 91.42 \tiny{± 0.41}& 73.81 \tiny{± 0.59} \\ 
Polynormer& 40.87 {\tiny{± 1.96}} &41.82 {\tiny{± 3.45}} & 54.81 {\tiny{± 0.49}}& 92.55 \tiny{± 0.37}& 97.46 \tiny{± 0.36}& 78.92 \tiny{± 0.89} \\ 
\rowcolor{gray!20}
\textbf{Polynormer$^*$}& \textcolor{darkorange!90}{\textbf{41.97}} {\tiny{± 2.14}} &41.97 {\tiny{± 3.18}} & \textcolor{tealblue!90}{\textbf{54.96}} {\tiny{± 0.22}}& \textcolor{customcyan}{\textbf{92.66}} \tiny{± 0.60}& {97.49} \tiny{± 0.48}& \textcolor{tealblue!90}{\textbf{78.94}} \tiny{± 0.78} \\ 
 \midrule 
 GCN & 38.67 {\tiny{± 1.84}} & 41.31 {\tiny{± 3.05}} &48.70 {\tiny{± 0.63}} &73.69 \tiny{± 0.74}&  89.75 \tiny{± 0.52}& 76.09 \tiny{± 1.27}
 \\
 \rowcolor{gray!20}
 \textbf{GCN$^*$} & \textcolor{customcyan}{\textbf{45.01}} {\tiny{± 1.63}} \textbf{6.34$\uparrow$} & \textcolor{customcyan}{\textbf{46.29}} {\tiny{± 3.40}} \textbf{4.98$\uparrow$} & 53.80 {\tiny{± 0.60}} \textbf{5.10$\uparrow$} & \textcolor{tealblue!90}{\textbf{91.27}} {\tiny{± 0.20}} \textbf{17.58$\uparrow$} &  \textcolor{customcyan}{\textbf{97.86}} {\tiny{± 0.24}} \textbf{8.11$\uparrow$} & \textcolor{customcyan}{\textbf{79.02}} {\tiny{± 0.60}} \textbf{2.93$\uparrow$} \\
 \midrule 
GraphSAGE & 36.09 {\tiny{± 1.99}} & 37.77 {\tiny{± 4.14}} &53.63 {\tiny{± 0.39}} &85.74 \tiny{± 0.67} &93.51 \tiny{± 0.57} &76.44 \tiny{± 0.62}\\
\rowcolor{gray!20}
\textbf{GraphSAGE$^*$} & 40.78 {\tiny{± 1.47}} \textbf{4.69$\uparrow$} & \textcolor{darkorange!90}{\textbf{44.81}} {\tiny{± 4.74}} \textbf{7.04$\uparrow$} &\textcolor{darkorange!90}{\textbf{55.40}} {\tiny{± 0.21}} \textbf{1.77$\uparrow$} &\textcolor{darkorange!90}{\textbf{91.06}} {\tiny{± 0.27}} \textbf{5.32$\uparrow$} & \textcolor{tealblue!90}{\textbf{97.77}} {\tiny{± 0.62}} \textbf{4.26$\uparrow$} & {77.21} {\tiny{± 1.28}} \textbf{0.77$\uparrow$} \\

\midrule 
GAT & 35.62 {\tiny{± 2.06}} &39.21 {\tiny{± 3.08}}   &52.70 {\tiny{± 0.62}} &88.75 \tiny{± 0.41} &93.91 \tiny{± 0.35}& 76.79 \tiny{± 0.71}\\ 
\rowcolor{gray!20}
\textbf{GAT$^*$} & 41.73 {\tiny{± 2.07}} \textbf{6.11$\uparrow$} &{44.13} {\tiny{± 4.17}} \textbf{4.92$\uparrow$} &\textcolor{customcyan}{\textbf{55.54}} {\tiny{± 0.51}} \textbf{2.84$\uparrow$} &{90.63} {\tiny{± 0.14}} \textbf{1.88$\uparrow$} &\textcolor{darkorange!90}{\textbf{97.73}} {\tiny{± 0.73}} \textbf{3.82$\uparrow$} & {77.95} {\tiny{± 0.51}} \textbf{1.16$\uparrow$} \\
        \bottomrule
    \end{tabular}
    }
    \label{tab:tab3}}
\end{table*}

\end{document}